\documentclass[runningheads]{llncs}

\usepackage[T1]{fontenc}
\usepackage{amsmath,amssymb,amsfonts}
\usepackage{mathtools}
\usepackage{graphicx}
\usepackage{booktabs}
\usepackage{array}
\usepackage{hyperref}
\usepackage{cite}
\usepackage[protrusion=true,expansion=false]{microtype}
\usepackage{xcolor}
\usepackage{comment}

\newcommand{\dFI}{d_{\mathrm{FI}}}
\newcommand{\dPMS}{d_{\mathrm{PMS}}}
\newcommand{\dVC}{d_{\mathrm{VC}}}
\newcommand{\dNat}{d_{\mathrm{Nat}}}

\newcommand{\Fstar}{F^{*}}
\newcommand{\Coh}{\mathcal{C}}

\newcommand{\VM}{\text{VM}}
\newcommand{\PMS}{\text{PMS}}
\newcommand{\RL}{\mathrm{RL}}
\newcommand{\Ldim}{\mathrm{Ldim}}

\newcommand{\eps}{\varepsilon}
\DeclareMathOperator*{\argmax}{arg\,max}

\begin{document}

\title{The Free Inference Dimension:\\
A Complexity Measure for Zero-Collision\\
Navigation under Hypothesis Mixtures}

% Double-anonymous submission: author information removed for review.
\titlerunning{The Free Inference Dimension}
%
%\author{Anominous}
%\begin{comment}
\author{%
Luiz Carlos Castro Guedes\inst{1,2}\orcidID{0009-0005-2405-2379} \and \\
Edward Hermann Haeusler\inst{1}\orcidID{0000-0002-4999-7476}\thanks{Edward H. Haeusler is p.f by FAPERJ grant APQ1 E-26/210.258/2019.249292), CNPq grant 309287/2023-5 and CAPES/COFECUB 88881.878969/2023-01}%
}
\authorrunning{L.C.C. Guedes and E.H. Haeusler}
\institute{Departamento de Inform\'atica -- PUC-Rio, Brazil \and
Se\c{c}\~ao de Engenharia de Computa\c{c}\~ao -- Instituto Militar de Engenharia, Brazil\\
\email{lcguedes@ime.eb.br, hermann@inf.puc-rio.br}}
%\end{comment}
%

\maketitle

\begin{abstract}

Solomonoff induction frames prediction as a mixture over computable hypotheses, typically leading to identification of the true environment. In a finite meta-reinforcement learning setting with nested constraint families, in our previous work, we observe a different regime: a value-mixture (VM) agent achieves near-optimal, zero-collision navigation without identifying the true environment, a phenomenon we call Free Inference. This regime persists up to a sharp density threshold, beyond which performance degrades and posterior-mode selection (PMS) becomes preferable.

We formalize this behavior via the Free Inference dimension \(d_{FI}(S,N)\), a combinatorial measure of the environmental complexity a VM agent can handle while preserving trajectory coherence. We prove \(d_{FI}\) is strictly smaller than the VC-dimension and relates to the Natarajan dimension up to a path-length factor, capturing the cost of non-decomposable loss. A PAC-style relaxation yields generalization bounds driven by \(d^{\varepsilon,\delta}_{FI}\). We also define a complementary PMS identification dimension and show that a hybrid strategy---averaging until the first collision, then switching to selection---is optimal, with links to Littlestone-type dimensions supported by grid-world experiments.

\keywords{VC-Dim \and Solomonoff \and Meta-RL \and 
Bayesian model averaging \and Combinatorial dim. \and PAC-Bayes
\and Online learning \and Safe exploration.}
\end{abstract}

%=========================================================================
\section{Introduction}
%=========================================================================

A central design question in meta-reinforcement learning~\cite{humplik2019,duan2016,rakelly2019,zintgraf2021} is how an agent should exploit
a finite Bayesian belief over candidate environments. Two canonical answers
stand out. The \emph{value-mixture (VM) agent} averages the optimal
$Q$-functions $Q_1, \ldots, Q_N$ of the $N$ hypothesized environments under
a belief $\xi$ and acts greedily with respect to
$\bar{Q} = \sum_i \xi(\nu_i) Q_i$; this is a finite, constructive
approximation of a Solomonoff-style universal mixture~\cite{solomonoff64,hutter2005}
restricted to a hypothesis class of environment models, and it corresponds
to the risk-neutral Bayes-adaptive control rule~\cite{ghavamzadeh15}.
The \emph{posterior mode sampling (PMS) agent}, by contrast, selects the
maximum-a-posteriori (MAP) environment at each step and follows its policy,
updating the belief only upon collision. Averaging versus selection is the
defining tension in action under posterior uncertainty, and the two
strategies have shown on our experiments to behave very differently in the action-selection
setting even though they use the same posterior.

On a recent work on this setting we have identified a
striking phase transition in the relative performance of the two modes.
When the candidate environments form a \emph{nested} family, where environments differ in
how many cells are classified as obstacles and more restrictive
environments contain all the obstacles of less restrictive ones, the VM
agent was shown to navigate from a start state to a goal state
\emph{without any collisions and without identifying the true environment},
up to a sharp density threshold. We have termed that regime as \emph{Free
Inference}, the regime where the agent collects value without paying the usual
identification cost, while the belief remains approximately uniform
throughout the episode. Beyond a certain density threshold, the mixture becomes
navigationally incoherent, the agent oscillates or collides, and PMS
begins to outperform VM. Empirically the boundary is remarkably stable:
$\rho_c \approx 0.7$ across grid sizes ranging from
$10 \times 5$ to $100 \times 50$.

The empirical behaviour of this threshold has the hallmarks of a
combinatorial capacity measure. It is a sharp threshold transition,
not a gradual one; it scales linearly with the state space; and it
separates a regime in which the learner is reliably correct from one in
which it is reliably wrong. Yet it does not match any existing dimension
in the learning-theoretic landscape, for two structural reasons. First,
the loss function (trajectory success) is \emph{non-decomposable}: local
correctness at every state is necessary but not sufficient for global
success, so Vapnik-Chervonenkis- and Natarajan-style pointwise arguments do not apply~\cite{vapnik1974pattern,natarajan89}.
Second, the adversary is \emph{non-adaptive}: the environment is committed
before the episode begins, in contrast to the adaptive adversary assumed
by Littlestone-style mistake-bound dimensions. This paper develops the
complexity-theoretic foundations of Free Inference, taking the empirical
phase transition as a starting point. Our central question is:
\emph{what is the combinatorial dimension underlying Free Inference, and
how does it sit within the existing hierarchy?}

\medskip\noindent\textbf{Contributions.} We make the following contributions.
\begin{enumerate}
\item We introduce the \emph{Free Inference dimension} $\dFI(\mathcal{S}, N)$,
  defined as the maximum size of a set of constraint cells FI-shatterable
  under all $N$-level nesting assignments (Definition~\ref{def:dfi}).
\item We show $\dFI < \dVC$ for $N \geq 2$, and quantify the ``coherence gap''
  as $\Delta_{\text{coh}} \approx 0.3 \, |\mathcal{S}|$ empirically
  (Section~\ref{sec:classical}).
\item We formulate a PAC-style relaxation ($\eps$-Free Inference) and prove a
  generalization bound in which $\dFI^{\eps,\delta}$ controls the sample
  complexity of certifying coherent navigation on unseen nesting schemes
  (Theorem~\ref{thm:generalization}).
\item We introduce the complementary \emph{PMS identification dimension}
  $\dPMS \leq N-1$ and prove that the hybrid strategy
  ``VM until first collision, then PMS'' is optimal in a precise sense
  (Theorem~\ref{thm:hybrid}).
\item We position $\dFI$ with respect to the Randomized Littlestone dimension
  of Filmus \emph{et al.}~\cite{filmus2025} and the Stackelberg--Littlestone
  dimension of Balcan \emph{et al.}~\cite{balcan2026}, identifying
  \emph{non-adaptivity} as the structural feature that enables the
  exponential gap $\dFI = \Theta(|\mathcal{S}|)$ vs.\ $\Ldim = O(\log N)$.
\item We describe a concrete experimental protocol on grid-world benchmarks
  (Section~\ref{sec:experiments}) designed to characterize the scaling of
  $\dFI(N)$ with the family size $N$ and validate the generalization bound.
\end{enumerate}

\medskip\noindent\textbf{Paper organization.} Section~\ref{sec:setting}
fixes the abstract setting. Section~\ref{sec:dimension} defines $\dFI$ via
an adversarial shattering game and establishes elementary properties.
Section~\ref{sec:classical} relates $\dFI$ to VC, Littlestone and Natarajan
dimensions. Section~\ref{sec:pac} develops the $\eps$-relaxation and proves
the generalization bound. Section~\ref{sec:pms} introduces $\dPMS$ and the
VM--PMS duality. Section~\ref{sec:learning-theory} summarizes the connection
to recent online-learning results. Section~\ref{sec:experiments} presents
the experimental protocol, and Section~\ref{sec:conclusion} concludes with
open problems.

%=========================================================================
\section{Abstract Setting}\label{sec:setting}
%=========================================================================

\subsection{Environments and Constraints}

\begin{definition}[Structured decision environment]
A \emph{structured decision environment} is a tuple
$\nu = (\mathcal{S}, \mathcal{A}, T_\nu, R_\nu, s_0, g)$ where
$\mathcal{S}$ and $\mathcal{A}$ are finite state and action spaces,
$T_\nu : \mathcal{S} \times \mathcal{A} \to \mathcal{S}$ is deterministic,
$R_\nu$ has asymmetric costs ($|r_{\text{penalty}}| \gg |r_{\text{step}}|$),
and $s_0, g \in \mathcal{S}$ are shared start and goal states. Each $\nu$
defines a \emph{constraint set} $O_\nu \subset \mathcal{S}$ (obstacles)
and a \emph{feasible set} $F_\nu = \mathcal{S} \setminus O_\nu$.
\end{definition}

A family $\mathcal{V} = \{\nu_1,\ldots,\nu_N\}$ is \emph{nested} if
$O_{\nu_1} \subseteq \cdots \subseteq O_{\nu_N}$. The
\emph{consensus feasible set} is
$\Fstar = \bigcap_{\nu \in \mathcal{V}} F_\nu$.

\begin{definition}[Entry-level assignment]\label{def:entry}
Given a set $C \subseteq \mathcal{S} \setminus \{s_0,g\}$ of \emph{constraint
cells} and $N$ levels, an \emph{entry-level assignment} is a map
$\ell: C \to \{1,\ldots,N\}$. The induced nested family
$\mathcal{V}_{C,\ell}$ has obstacle sets
$O_k = \{c \in C : \ell(c) \leq k\}$. The number of such assignments is
$N^{|C|}$.
\end{definition}

\subsection{Value-Mixture Agent and Free Inference}

Given expert action-value functions $\{Q_i\}_{i=1}^N$, one per environment,
and belief $\xi : \mathcal{V} \to [0,1]$, the \emph{value-mixture policy} is
\[
\pi_{\VM}(s) = \argmax_{a \in \mathcal{A}}\; \sum_{i=1}^{N} \xi(\nu_i)\, Q_i(s, a).
\]
Under uniform belief $\xi(\nu_i) = 1/N$ this reduces to
$\pi_{\VM}(s) = \argmax_a \bar{Q}(s,a)$.

\begin{definition}[Free Inference]
The VM agent achieves \emph{Free Inference} on $\mathcal{V}$ if, under
uniform belief, the trajectory $(s_0, s_1, \ldots, s_T)$ generated by
$\pi_{\VM}$ satisfies (i) $s_T = g$ and
(ii) $s_t \in \Fstar$ for all $t$.
\end{definition}

Free Inference is a \emph{zero-regret, zero-collision} property: the agent
reaches the goal while remaining in cells that are free in \emph{every}
candidate environment.

\begin{remark}[Averaging as a worst-case guarantee]\label{rem:worstcase}
Under uniform belief $\bar Q(s,a)=\tfrac1N\sum_i Q_i(s,a)$, and since penalties
dominate step costs ($|r_{\text{penalty}}|\gg|r_{\text{step}}|$), any action
colliding in even one hypothesis incurs a term $\approx r_{\text{penalty}}/N$
that dominates the step-cost spread from the others. Thus $\argmax_a\bar Q$
vetoes any action unsafe in \emph{any} $\nu_i$: greedy-on-average acts as a soft
max--min over the family, which is why averaging, not identification,
yields a guarantee holding across all hypotheses at once. It also shows why the
loss is \emph{non-decomposable}: at a fork where every action is locally
feasible, $\bar Q$ may steer into a consensus-free pocket whose only exit is a
contested cell; the veto then blocks that exit, so the agent never collides yet
never reaches $g$. Local correctness held everywhere; global coherence (C2) did not.
\end{remark}

%=========================================================================
\section{The Free Inference Dimension}\label{sec:dimension}
%=========================================================================

\subsection{The Shattering Game}

We define $\dFI$ through an adversarial game between a \emph{Constructor},
who designs the constraint structure, and the VM agent, which must navigate
coherently.

\begin{definition}[FI-shattering game]
The game $\mathcal{G}_{\text{FI}}(\mathcal{S}, N)$ proceeds as follows:
(1) the Constructor selects $C \subseteq \mathcal{S} \setminus \{s_0,g\}$ of
size $d$; (2) the Constructor selects an entry-level assignment
$\ell: C \to \{1,\ldots,N\}$, inducing the family $\mathcal{V}_{C,\ell}$
(Definition~\ref{def:entry}); (3) the VM
computes the mixture policy $\pi_{\VM}$; (4) the VM \emph{wins} iff Free
Inference holds on $\mathcal{V}_{C,\ell}$.
\end{definition}

\begin{definition}[FI-shatterable set]
A set $C$ is \emph{FI-shatterable} if, for every entry-level assignment
$\ell : C \to \{1,\ldots,N\}$ (i.e., for all $N^{|C|}$ nesting schemes),
the VM achieves Free Inference.
\end{definition}

\begin{definition}[Free Inference dimension]\label{def:dfi}
The \emph{Free Inference dimension} of $\mathcal{S}$ with $N$ environments is
\[
\dFI(\mathcal{S}, N) \;=\; \max\bigl\{|C| : C \subseteq \mathcal{S} \setminus \{s_0,g\},\; C \text{ is FI-shatterable}\bigr\}.
\]
\end{definition}

The construction parallels VC-dimension closely. Table~\ref{tab:analogy}
juxtaposes the two definitions: where VC requires that for every binary
labeling of $d$ points some hypothesis realizes it, $\dFI$ requires that
for every $N$-level entry-level assignment of $d$ constraint cells the
VM achieves Free Inference. The structural shifts from pointwise
correctness to trajectory coherence, and from decomposable to
non-decomposable loss, are what distinguish $\dFI$ from $\dVC$.

\begin{table}[t]
\centering
\caption{Analogy between VC-dimension and the Free Inference dimension.}
\label{tab:analogy}
\begin{tabular}{p{0.46\textwidth}p{0.46\textwidth}}
\toprule
\textbf{VC-dimension} & \textbf{Free Inference dimension} \\
\midrule
Max $d$ points shatterable & Max $d$ constraint cells FI-shatterable \\
For all $2^d$ binary labelings & For all $N^d$ entry-level assignments \\
Some hypothesis realizes each labeling & VM achieves FI for each nesting \\
Pointwise correctness & Trajectory coherence \\
Decomposable loss & Non-decomposable loss \\
\bottomrule
\end{tabular}
\end{table}

\subsection{Characterization via Coherence}

Checking all $N^{|C|}$ assignments is the Constructor's discrete trap-door
game of Section~\ref{sec:dimension}; the margin below is its continuous
\emph{certificate}: for a fixed assignment, a non-negative margin at every
state along some path through $\Fstar$ means the greedy mixture follows that
path coherently, so $\Coh(\mathcal{V})\geq 0$ is exactly ``the VM wins''
reduced to arithmetic on $\bar Q$.

Let $\Coh(\mathcal{V})$ denote the \emph{coherence score}, defined
as the minimum normalized margin along the widest-bottleneck
path through $\Fstar$:
\[
\Coh(\mathcal{V}) \;=\; \max_P \min_{s \in P}\, m(s, P, \mathcal{V}),
\]
where $m(s, P, \mathcal{V})$ is the normalized margin at state $s$ along path $P$, between the path-continuation action ($a_P(s)$) and the best alternative under $\bar Q$:

\[
  m(s, P, \mathcal{V}) = \frac{\bar{Q}(s, a_P(s)) - \max_{a \neq a_P(s)} \bar{Q}(s, a)}{\max_a \bar{Q}(s, a) - \min_a \bar{Q}(s, a)}
\]

When the denominator vanishes (all actions share the same expected value) we
set $m(s, P, \mathcal{V}) = 0$: the path-continuation action is not disfavored,
so the state contributes no margin violation.

\begin{theorem}[Coherence characterization]\label{thm:coherence}
Free Inference holds on a nested family $\mathcal{V}$ iff
$\Coh(\mathcal{V}) \geq 0$. Consequently,
\[
\dFI(\mathcal{S}, N) \;=\; \max\Bigl\{|C| : \min_{\ell: C \to [N]} \Coh(\mathcal{V}_{C,\ell}) \geq 0\Bigr\}.
\]
\end{theorem}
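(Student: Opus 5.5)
The plan is to prove the two directions of the equivalence separately and then read off the dimension formula by unfolding Definition~\ref{def:dfi}. Throughout, $P$ ranges over simple paths from $s_0$ to $g$ contained in $\Fstar$. For each state $s$ on $P$ other than $g$, $a_P(s)$ denotes the action with $T(s,a_P(s))$ equal to the successor of $s$ on $P$. The environments share $s_0$, $g$ and (on $\Fstar$) the transition structure, so $T$ restricted to $\Fstar$ is common to all $\nu_i$ and the trajectory of $\pi_{\VM}$ is well defined. I will also fix a tie-breaking convention for $\argmax$ that favours the path-continuation action. This is the convention implicit in setting $m=0$ when the denominator vanishes, and more generally whenever the margin is exactly $0$.

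\emph{($\Leftarrow$)} Suppose $\Coh(\mathcal{V})\ge 0$. Since the set of paths is finite, the outer $\max$ is attained at some $P^\ast=(s_0=p_0,\dots,p_T=g)\subseteq\Fstar$ with $m(p_t,P^\ast,\mathcal{V})\ge 0$ for all $t<T$. Fix such a $t$. A nonnegative numerator means $\bar Q(p_t,a_{P^\ast}(p_t))\ge \max_{a\ne a_{P^\ast}(p_t)}\bar Q(p_t,a)$, so by the tie-breaking rule $\pi_{\VM}(p_t)=a_{P^\ast}(p_t)$. In the degenerate case where the denominator is zero, all actions tie and the same rule applies. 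Induction on $t$ then shows that the VM trajectory coincides with $P^\ast$. Hence $s_T=g$ and every $s_t\in\Fstar$, which gives (i) and (ii).

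\emph{($\Rightarrow$)} Suppose Free Inference holds, and let $(s_0,\dots,s_T)$ be the VM trajectory with $s_T=g$ and all $s_t\in\Fstar$. I take $P$ to be this trajectory. The main obstacle is that it may revisit states, for instance by oscillation. However, $\pi_{\VM}$ is a deterministic stationary policy under the fixed uniform belief, so a revisit would force an infinite loop and contradict reaching $g$. The trajectory is therefore a simple path in $\Fstar$ and is admissible in the $\max_P$. At each $s_t$ we have $a_P(s_t)=\pi_{\VM}(s_t)\in\argmax_a\bar Q(s_t,a)$. The numerator of $m$ is then $\ge 0$, and the denominator is either positive or zero, in which case $m=0$ by definition. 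So $\min_{s\in P} m(s,P,\mathcal{V})\ge 0$, and therefore $\Coh(\mathcal{V})\ge 0$. One subtlety needs care: if the argmax has ties resolved against the path action, the margin is still $0$, not negative, so the argument goes through regardless of the convention.

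\emph{Dimension formula.} By the equivalence, $C$ is FI-shatterable iff $\Coh(\mathcal{V}_{C,\ell})\ge0$ for every $\ell\in[N]^C$. Since the minimum over the finitely many $N^{|C|}$ assignments is attained, this holds iff $\min_\ell \Coh(\mathcal{V}_{C,\ell})\ge 0$. Taking the maximum of $|C|$ over such sets gives the stated expression for $\dFI(\mathcal{S},N)$, by Definition~\ref{def:dfi}.
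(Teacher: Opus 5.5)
Your proof is correct and is the argument the paper only gestures at in the paragraph before the theorem: a nonnegative margin at every state of some path through $\Fstar$ makes the greedy mixture follow that path. You also supply the converse, which the paper merely asserts with the word ``exactly'', by taking the VM trajectory itself as the witness path and using that $\pi_{\VM}$ is deterministic and stationary under the fixed uniform belief, so the trajectory is simple. One point to keep explicit: your tie-breaking rule is chosen relative to the maximizing path $P^\ast$, so $(\Leftarrow)$ really shows that some greedy selection from $\argmax_a \bar Q$ achieves Free Inference, which matches the paper's implicit reading of zero margin as ``not disfavored''; with a fixed, path-independent tie-break, only strictly positive margins at the non-goal states of $P^\ast$ would force the agent along $P^\ast$.
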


This gives a computable (though not yet efficient) characterization of
$\dFI$: check the worst-case entry-level assignment.

%=========================================================================
\section{Relationship to Classical Dimensions}\label{sec:classical}
%=========================================================================

We can now answer the guiding question of Section~\ref{sec:setting}: $\dFI$
\emph{is} the combinatorial capacity of Free Inference
(Definition~\ref{def:dfi}), sitting strictly below VC by the coherence gap
(Theorem~\ref{thm:vc-sep}), between Natarajan bounds up to the
non-decomposability factor $L_{\max}$ (Theorem~\ref{thm:nat-gap}), and
exponentially above the Littlestone scale through non-adaptivity
(Section~\ref{sec:learning-theory}).

\subsection{VC-dimension and the Coherence Gap}

Let $\mathcal{H}_{\text{C1}}$ denote the hypothesis class of functions $h_\mathcal{V} : \mathcal{S} \to \{0, 1\}$ where $h_\mathcal{V}(s) = \text{1}[s \in F^*]$ as the family
$\mathcal{V}$ varies. For nested grid families,
$\dVC(\mathcal{H}_{\text{C1}}) = |\mathcal{S}|-2$.

\begin{theorem}[Strict separation from VC]\label{thm:vc-sep}
For $|\mathcal{S}| > N+2$ and $N \geq 2$,
$\dFI(\mathcal{S}, N) < \dVC(\mathcal{H}_{\text{C1}})$.
\end{theorem}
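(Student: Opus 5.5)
Since $\dVC(\mathcal{H}_{\text{C1}}) = |\mathcal{S}|-2$ has been fixed just above, the statement reduces to showing that no set $C \subseteq \mathcal{S}\setminus\{s_0,g\}$ with $|C| = |\mathcal{S}|-2$ is FI-shatterable. By Definition~\ref{def:dfi}, FI-shatterability is monotone downward: if $C$ is FI-shatterable, so is every subset, since an assignment on a subset can be extended by placing the extra cells at level $N$. Hence it suffices to treat the single maximal set $C_{\max} = \mathcal{S}\setminus\{s_0,g\}$. I will exhibit one entry-level assignment $\ell$ on $C_{\max}$ under which Free Inference fails. Equivalently, by Theorem~\ref{thm:coherence}, $\Coh(\mathcal{V}_{C_{\max},\ell}) < 0$.

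The natural choice is the assignment that puts every cell at level $1$, i.e.\ $\ell \equiv 1$. Then $O_1 = \cdots = O_N = C_{\max}$, so $\Fstar = \{s_0, g\}$. Free Inference requires a trajectory from $s_0$ to $g$ that stays in $\Fstar$. This is possible only if some action moves $s_0$ directly to $g$. I will assume the standing grid convention that $s_0$ and $g$ are not adjacent under $T_\nu$. This is where the hypothesis $|\mathcal{S}|>N+2$ enters: it guarantees room for the intermediate cells, so the construction has at least $N+1$ genuine constraint cells to distribute. Under that assumption every path from $s_0$ to $g$ passes through a cell of $C_{\max}$, which lies in $O_1 \subseteq O_\nu$ for every $\nu$. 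So either the VM trajectory enters a non-consensus cell, violating condition (ii), or Remark~\ref{rem:worstcase}'s veto keeps it at $s_0$ forever, violating condition (i). In both cases $\Coh<0$, because no path through $\Fstar$ exists, so the maximum is over the empty set or the margins are negative.

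The main obstacle is that this degenerate argument may feel too cheap. It also leans on the non-adjacency of $s_0$ and $g$, which the abstract setting does not state explicitly. To make it robust, I would instead use $N\ge 2$ together with $|\mathcal{S}|>N+2$, and pick any cut set $K\subseteq C_{\max}$ separating $s_0$ from $g$. Assign $\ell(c)=1$ for $c\in K$ and arbitrary levels elsewhere. This makes the argument insensitive to the geometry off the cut.

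If one worries about the trivial grid where $s_0$ and $g$ are adjacent, the plan for that case is the fork or pocket construction of Remark~\ref{rem:worstcase}. Use the $N\ge2$ levels to build a consensus-free pocket. Its entrance has a large $\bar Q$ because the cells are feasible in the lenient environments, but its exit is contested. With $|\mathcal{S}|>N+2$ there are enough cells to stagger the levels so that $\bar Q$ strictly prefers entering the pocket, and the veto then traps the agent. Either way, $C_{\max}$ fails to be shattered, giving $\dFI \le |\mathcal{S}|-3 < \dVC$.
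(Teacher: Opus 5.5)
Under the assumption that $s_0$ and $g$ are not adjacent, your main argument is correct, and it takes a different, more elementary route than the paper. You do not need monotonicity, because $C_{\max}=\mathcal{S}\setminus\{s_0,g\}$ is the only subset of size $|\mathcal{S}|-2$; the theorem is literally the claim that $C_{\max}$ is not FI-shatterable. That is fortunate, since your justification of monotonicity fails. Pushing the extra cells to level $N$ enlarges $O_N$, which changes $Q_N$, $\bar Q$ and $\Fstar$. FI on the extended family therefore says nothing directly about the original family.

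A second simplification: every cell of $C$ lies in $O_N$ whatever $\ell$ is, so $\Fstar=\mathcal{S}\setminus C$ for \emph{every} assignment. On $C_{\max}$ you always get $\Fstar=\{s_0,g\}$, and the choice $\ell\equiv 1$ and the cut-set refinement add nothing.

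The paper argues through condition C2 instead. It points to configurations with $|C|<|\mathcal{S}|-2$ in which $\Fstar$ stays connected but inter-environment disagreement in $\bar Q$ breaks coherence. That route aims to explain why the gap is large ($\approx 0.3|\mathcal{S}|$). However, to reach the stated inequality it must transfer failure from a smaller $C$ up to $C_{\max}$, which needs exactly the downward-closure property you asserted. Your route proves the stated inequality $\dFI\le|\mathcal{S}|-3$ directly but says nothing about the size of the gap. It also works through a connectivity failure (no $s_0$--$g$ path inside $\Fstar$), not the coherence failure the paper has in mind.

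The genuine problem is non-adjacency. The hypothesis $|\mathcal{S}|>N+2$ does not supply it in the abstract setting, and it plays no role in your argument. Your fallback for the adjacent case cannot work, because in that case the statement is false. Suppose some action $a^*$ sends $s_0$ to $g$. Since $g\notin C$, $g$ is never an obstacle, so $a^*$ is optimal at $s_0$ in every $Q_i$ and hence maximizes $\bar Q(s_0,\cdot)$. The VM trajectory $(s_0,g)$ then achieves Free Inference for every $\ell$ on $C_{\max}$; with $\Fstar=\{s_0,g\}$ there is no room for any pocket. So $C_{\max}$ is FI-shatterable and $\dFI=|\mathcal{S}|-2=\dVC$.

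Non-adjacency must therefore be an explicit hypothesis, not a case handled by construction. In the paper's grid convention ($s_0=(0,H-1)$, $g=(W-1,0)$, standard 4- or 8-neighbour moves), it follows from $|\mathcal{S}|\ge 5$, which $|\mathcal{S}|>N+2$ with $N\ge 2$ guarantees. That is the only legitimate use of the cardinality assumption in your proof.
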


\begin{proof}[Sketch]
VC captures pointwise feasibility (Condition C1: $F^*$ definable). FI
additionally requires trajectory coherence (Condition C2: $\bar Q$ points
into $F^*$ at every state on the path). For $N \geq 2$ there exist
constraint configurations $C$ with $|C| < |\mathcal{S}|-2$ where $\Fstar$
is connected but inter-environment disagreement breaks coherence. \qed
\end{proof}

\begin{definition}[Coherence gap]
$\Delta_{\text{coh}}(\mathcal{S}, N) = \dVC(\mathcal{H}_{\text{C1}}) - \dFI(\mathcal{S}, N)$.
\end{definition}

The gap quantifies the capacity cost of requiring trajectory coherence
beyond local obstacle avoidance. Empirically
$\Delta_{\text{coh}} \approx 0.3 \, |\mathcal{S}|$: about 30\% of grid
capacity is consumed by the non-decomposability of the trajectory loss,
a $\approx 43\%$ overhead over the pointwise regime.

\subsection{Littlestone and Natarajan Dimensions}

The Littlestone dimension of the $N$-expert class $U_N$ is
$\Ldim(U_N) = \lfloor \log_2 N \rfloor$. The natural comparison point for
$\dFI$ in the online-learning literature is the \emph{collision depth}
\[
\mathrm{col}(\mathcal{S}, N) = \max_{\mathcal{V},\nu^*} B_{\VM}(\mathcal{V}, \nu^*),
\]
where $B_{\VM}$ is the number of obstacle collisions suffered during a single
navigation episode. Since each collision eliminates at least one environment,
$\mathrm{col}(\mathcal{S}, N) \leq N-1$.

\begin{proposition}[Structural duality]\label{prop:duality}
$\dFI$ and $\mathrm{col}$ answer dual questions about the same adversarial
game: $\dFI$ is the maximum problem size for which zero collisions are
achievable (a \emph{capacity}, analogous to VC-dimension);
$\mathrm{col}$ is the maximum collision count under any problem size
(an \emph{error count}, analogous to Littlestone dimension). They are
incommensurable (measured in different units) and neither bounds the
other non-trivially.
\end{proposition}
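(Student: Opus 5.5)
We make Proposition~\ref{prop:duality} precise by fixing the shared game and proving two formal claims. The game is $\mathcal{G}_{\text{FI}}(\mathcal{S},N)$ extended by a choice of true environment $\nu^*\in\mathcal{V}_{C,\ell}$.
\begin{enumerate}
\item \emph{Duality.} $\dFI$ is a supremum over problem size $|C|$ subject to the worst-case collision count being $0$.
\item \emph{Incommensurability.} $\mathrm{col}$ is a supremum of the collision count over all $(C,\ell,\nu^*)$.
\end{enumerate}

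\textbf{Step 1: the duality statement.} Here the plan is essentially unfolding definitions. By Theorem~\ref{thm:coherence}, $C$ is FI-shatterable iff $\min_\ell \Coh(\mathcal{V}_{C,\ell})\ge 0$. Under Free Inference the trajectory stays in $\Fstar\subseteq F_{\nu^*}$ for every $\nu^*$, so $B_{\VM}(\mathcal{V}_{C,\ell},\nu^*)=0$. Hence
\[
\dFI(\mathcal{S},N)=\max\{|C| : \max_{\ell,\nu^*} B_{\VM}(\mathcal{V}_{C,\ell},\nu^*)=0 \text{ and the goal is reached}\},
\]
whereas $\mathrm{col}=\max_{C,\ell,\nu^*}B_{\VM}$. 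The two quantities optimize different coordinates of the same payoff pair $(|C|,B_{\VM})$: one maximizes the size coordinate on the zero-collision slice, the other maximizes the collision coordinate over all sizes. This matches the VC analogy (largest shattered set) and the Littlestone analogy (worst-case mistake count).

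\textbf{Step 2: neither bounds the other.} I would exhibit two families of state spaces that decouple the quantities.
\begin{enumerate}
\item \emph{$\dFI$ large, $\mathrm{col}$ bounded.} Take a grid with a wide corridor from $s_0$ to $g$ and place constraint cells off the corridor. Remark~\ref{rem:worstcase} ensures the mixture never enters contested cells. This should give $\dFI=\Theta(|\mathcal{S}|)$, while $\mathrm{col}\le N-1$ holds independently of $|\mathcal{S}|$. So no function of $\mathrm{col}$ upper-bounds $\dFI$.
\item \emph{$\mathrm{col}$ large, $\dFI$ tiny.} Take a narrow one-cell corridor where every path must cross contested cells. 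This forces $\dFI=0$, and any single contested cell on the corridor breaks coherence. Meanwhile an adversarial $\nu^*$ (the most restrictive environment), paired with an assignment staggering entry levels along the corridor, should realize $\mathrm{col}$ close to $N-1$. Indeed each post-collision posterior update shifts $\bar Q$ toward the next environment, which then steps into the next-level obstacle.
\end{enumerate}
Varying $|\mathcal{S}|$ and $N$ independently then shows that neither quantity is bounded by a nontrivial function of the other.

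\textbf{Main obstacle.} The delicate step is the lower bound $\mathrm{col}\approx N-1$ in the corridor construction. The VM agent's veto behavior may stall instead of colliding: per Remark~\ref{rem:worstcase}, uniform belief makes it avoid any cell contested by some hypothesis. Collisions therefore arise only if the blocking forces motion, or if the belief dynamics after updates make the mixture temporarily trust lenient hypotheses. I would first try a corridor where all non-corridor actions are walls present in all $N$ environments, so that staying or reversing is strictly worse under $\bar Q$ than advancing. If that fails, I would weaken the claim to incommensurability via the first construction alone, together with the trivial observation that $\mathrm{col}$ is independent of $|\mathcal{S}|$ while $\dFI$ is not.
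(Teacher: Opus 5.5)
The paper states this proposition without proof. It is offered as a reading of the definitions, supported only by the claim $\mathrm{col}\le N-1$ (each collision eliminates an environment) and the asserted scaling $\dFI=\Theta(|\mathcal S|)$. Your Step~1 and the first half of Step~2 reconstruct exactly that. So everything rests on your extra claim that $\mathrm{col}$ is not bounded by any function of $\dFI$, and there the corridor construction fails.

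\textbf{The corridor construction.} You never say what a collision does to the agent, and ``the next environment then steps into the next-level obstacle'' silently requires the agent to get \emph{past} the previous obstacle. Consider first the reading suggested by the paper's ``one-bump corrections'' and needed for its count $\mathrm{col}\le N-1$: the agent bumps, stays put, and every collision is informative. Then a collision occurs only at a cell of $O_{\nu^*}$, and on a one-cell corridor that cell severs the only route. With $\nu^*=\nu_N$ every contested cell is blocked, so at most one collision can eliminate a hypothesis.

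Now suppose instead obstacles are passable at a penalty. The corridor can then produce many collisions, but $B_{\VM}$ also counts forced passages through cells already known to be blocked. So $\mathrm{col}\le N-1$ no longer follows, and your first half loses its footing. The two halves rely on incompatible collision models.

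The veto problem you flag is also more than a technicality. Since $O_{\nu^*}\subseteq O_{\nu_N}=C$, no collision ever eliminates $\nu_N$, so the posterior always keeps weight at least $1/N$ on an environment blocking all of $C$. In the regime of Remark~\ref{rem:worstcase} the VM therefore never enters $C$ while a safe backtracking move exists. Your requirement that advancing be strictly better than reversing contradicts that regime.

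\textbf{Why this direction cannot be rescued under the bump reading.} Under that reading, with collisions counted as the paper counts them, the direction is false, so no construction can rescue it.

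First, for any shortest $s_0$--$g$ path $P$, the set $\mathcal S\setminus P$ is FI-shatterable. $P$ is unobstructed in every $\nu_i$, so the continuation action is optimal for each $Q_i$. It is strictly optimal for $Q_N$, because $P$ is the entire free set of $\nu_N$. Hence $\bar Q$ follows $P$ for every $\ell$.

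Combined with the trivial upper bound (FI needs a free $s_0$--$g$ path inside $\mathcal S\setminus C$), this gives $\dFI(\mathcal S,N)=|\mathcal S|-d(s_0,g)-1$ for every $N$. So bounded $\dFI$ leaves only $\dFI$ cells off $P$. The counting then goes as follows:
\begin{itemize}
\item hypothesis-eliminating collisions occur at distinct blocked cells adjacent to visited free cells;
\item each off-path cell is adjacent to at most three consecutive cells of $P$, since otherwise $P$ would not be shortest;
\item counting maximal runs of visited path cells then bounds such collisions by $O(\dFI+1)$.
\end{itemize}
This is a non-trivial bound of $\mathrm{col}$ in terms of $\dFI$.

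\textbf{What you can prove.} The same shortest-path set $\mathcal S\setminus P$ is the rigorous form of your ``wide corridor''. The veto alone does not exclude the stalls inside $\Fstar$ that Remark~\ref{rem:worstcase} itself describes; keeping a route that stays shortest in every $\nu_i$ does. It proves the direction that does hold: $\dFI$ is unbounded at fixed $N$ while $\mathrm{col}\le N-1$. Your fallback yields only this direction. As written, the proposal establishes that $\mathrm{col}$ does not bound $\dFI$, not that neither bounds the other.
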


The Natarajan dimension generalizes VC to multi-class labels. Since the
nesting label $\ell(c) \in \{1,\ldots,N\}$ is ordinal, the FI-shattering
condition of Definition~\ref{def:dfi} coincides with an
``FI-Natarajan'' shattering of $C$ under ordinal labels.

\begin{theorem}[Non-decomposability gap]\label{thm:nat-gap}
$\dFI(\mathcal{S}, N) \;\leq\; \dNat(\mathcal{H}_{\text{ordinal}}, N) \;\leq\; \dFI(\mathcal{S}, N) \cdot L_{\max}$,
where $L_{\max}$ is the maximum path length from $s_0$ to $g$ through $\Fstar$.
\end{theorem}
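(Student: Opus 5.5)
The plan is to prove the two inequalities separately, after fixing what $\dNat(\mathcal{H}_{\text{ordinal}}, N)$ means. We read $\mathcal{H}_{\text{ordinal}}$ as the class of maps $s \mapsto \ell(s)$ over cells, labelling each cell with its ordinal entry level. A set $C$ is Natarajan-shattered, in the ``FI-Natarajan'' sense described just before the statement, if for every ordinal labelling $\ell$ of $C$ the following holds: the \emph{pointwise} (local) navigation condition is met at every state of some path through $\Fstar$. Concretely, the continuation action at each such state is not vetoed by $\bar Q$. The whole trajectory is not required to be coherent. The difference between FI-shattering and Natarajan shattering is therefore that between a global condition (Theorem~\ref{thm:coherence}: $\Coh(\mathcal{V}_{C,\ell}) \geq 0$, a max over paths of a min over states) and a family of local conditions checked state by state.

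\textbf{Lower inequality.} Let $C$ be FI-shatterable. For each $\ell$, Theorem~\ref{thm:coherence} gives a path $P$ with $m(s,P,\mathcal{V}_{C,\ell}) \geq 0$ at every $s \in P$. In particular, each local condition holds at every state of $P$. Hence $C$ is Natarajan-shattered under ordinal labels. Taking the maximum over $C$ gives $\dFI \leq \dNat$. This is the direct analogue of ``realizable implies pointwise realizable,'' and it requires only unpacking the definitions.

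\textbf{Upper inequality.} Here we start from a Natarajan-shattered set $C'$ and must extract an FI-shatterable set of size at least $|C'|/L_{\max}$. The key observation is that the non-decomposable loss fails only through an interaction between cells lying on a \emph{single} trajectory. By Remark~\ref{rem:worstcase}, incoherence arises when $\bar Q$ steers into a pocket whose exit is a contested cell. Such a pocket is witnessed by at most $L_{\max}$ consecutive decisions along a path from $s_0$ to $g$.

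The plan is to partition $C'$ according to the positions along a maximal feasible path: assign each cell to the index $t \in \{1,\dots,L_{\max}\}$ of the path step at which it first influences the margin. This gives at most $L_{\max}$ classes, so by pigeonhole some class $C''$ has $|C''| \geq |C'|/L_{\max}$. We then argue that the cells of $C''$ influence only one decision step each, and do not chain. The local (Natarajan) guarantee at each step then composes into a global nonnegative coherence score for every labelling of $C''$. So $C''$ is FI-shatterable, giving $\dNat \leq L_{\max}\cdot\dFI$.

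\textbf{Main obstacle.} The hard step is the decoupling claim inside the upper bound. Relabelling cells outside $C''$ (fixed, say, at level $N$ or absent) must not create new pockets. Each cell in $C''$ must also affect $\bar Q$ at only a bounded set of states. I would first try to fix the cells outside $C''$ to a canonical assignment that makes them consensus obstacles. This shrinks $\Fstar$ but keeps some path from $s_0$ to $g$ of length at most $L_{\max}$. I would then verify that the margins along that path depend only on labels in $C''$ at distinct steps. If this direct decoupling fails, the fallback is a counting argument. Each failing labelling of a candidate set would be charged to a witnessing path step, and since a path has at most $L_{\max}$ steps, a union bound over steps yields the same factor.
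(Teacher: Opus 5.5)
\textbf{Verdict: the lower inequality is fine, but the upper inequality has a genuine gap.} The paper states this theorem without proof. It offers only the remark that coherence imposes $L_{\max}$ pointwise conditions simultaneously while Natarajan shattering checks them independently, and your lower bound is exactly that remark made explicit.

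\textbf{The definition.} Your Natarajan notion is not pinned down, and the ambiguity decides what there is to prove.
\begin{itemize}
\item If ``not vetoed'' means $m(s,P,\mathcal{V}_{C,\ell})\ge 0$ at every state of one path $P$, your condition is verbatim $\Coh(\mathcal{V}_{C,\ell})\ge 0$. By Theorem~\ref{thm:coherence} that is FI-shattering itself. Then $\dNat=\dFI$, the upper bound is the triviality $\dFI\le\dFI\cdot L_{\max}$, and your pigeonhole argument is superfluous. Your sentence ``the whole trajectory is not required to be coherent'' is then false.
\item If ``not vetoed'' means the veto of Remark~\ref{rem:worstcase} (the next cell is safe in every $\nu_i$), it holds automatically on any path inside $\Fstar$. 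Since $O_N=C$ for every $\ell$, we have $\Fstar=\mathcal{S}\setminus C$. Shattering then reduces to $s_0$--$g$ connectivity of $\mathcal{S}\setminus C$, independent of $\ell$, and the upper bound becomes a substantive claim your argument must actually prove.
\end{itemize}

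\textbf{The upper bound.} It is not proved, and the step you flag as the ``main obstacle'' is where it breaks down; it is not a technicality.
\begin{itemize}
\item \emph{The partition is not well defined.} Grouping $C'$ by ``the step at which a cell first influences the margin'' depends on the path, and the path and the $Q_i$ themselves vary with $\ell$. Pigeonhole needs one labelling-independent partition, fixed before you quantify over all $N^{|C''|}$ labellings of $C''$.
\item \emph{The locality you need is false in general.} $Q_i$ is an optimal action-value function. Toggling one cell of $O_i$ changes $Q_i(s,\cdot)$ at every state whose $\nu_i$-optimal route uses that cell, which can be every state of $P$.
\item \emph{Even granting locality, the composition step is unsupported.} Concluding that per-step guarantees ``compose'' into one path with nonnegative margin throughout is exactly the local-to-global inference that Remark~\ref{rem:worstcase} shows can fail: every step is locally admissible, yet the greedy trajectory is trapped in a pocket. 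You give no mechanism for it.
\item \emph{You need a heredity lemma you do not have.} FI-shattering of $C''$ is a statement about the families $\mathcal{V}_{C'',\ell''}$, in which every cell of $C'\setminus C''$ is free in all $N$ environments. You cannot ``fix'' those cells at level $1$ or $N$. Level $N$ is not ``absent'', since it still removes the cell from $\Fstar$. Fixing them yields the different family $\mathcal{V}_{C',\ell'}$, with different $Q_i$. Transferring guarantees from $C'$ to $C''$ needs a lemma that shatterability is closed under subsets. That lemma is neither available nor obvious, because freeing cells reroutes the $Q_i$ and can open new pockets.
\item \emph{The union-bound fallback proves the wrong kind of statement.} It bounds a failure probability. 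It cannot produce a set on which every labelling succeeds, which is what $\dFI\ge|C'|/L_{\max}$ requires.
\end{itemize}

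A correct proof would need three things: a precise definition of $\dNat(\mathcal{H}_{\text{ordinal}},N)$, a subset-heredity result for shattering, and a genuine bound on how far each cell's label influences $\bar Q$. Your proposal supplies none of these.
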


The multiplicative factor $L_{\max}$ quantifies the cost of non-decomposable
loss: trajectory coherence requires $L_{\max}$ pointwise conditions
\emph{simultaneously}, while Natarajan shattering treats them independently.
For grid-worlds, $L_{\max} = O(W+H)$, so the gap is at most linear in the
grid diameter.

%=========================================================================
\section{$\eps$-Free Inference and a PAC Relaxation}\label{sec:pac}
%=========================================================================

Strict Free Inference requires zero errors. We now develop a
probabilistic relaxation connecting the framework to standard PAC learning.

\subsection{The $\eps$-Free Inference Condition}

Let the \emph{margin violation rate} along the widest-bottleneck path $P^*$
be
\[
\eps_{\text{margin}}(\mathcal{V}) = \frac{|\{s \in P^* : m(s, P^*, \mathcal{V}) < 0\}|}{|P^*|}.
\]

\begin{definition}[$\eps$-Free Inference]
The VM agent achieves \emph{$\eps$-Free Inference} on $\mathcal{V}$ if
$\eps_{\text{margin}}(\mathcal{V}) \leq \eps$. At most an $\eps$-fraction of
states on $P^*$ have negative margin; the trajectory may still reach $g$
after a small number of ``one-bump'' corrections in which a collision
concentrates the belief and restores coherence.
\end{definition}

Strict Free Inference is $\eps=0$. The $\eps$-relaxation formalizes the
near-critical regime in which a single collision suffices to reach the goal.

\subsection{The PAC-Free Inference Dimension}

\begin{definition}[PAC-FI dimension]
\[
\dFI^{\eps,\delta}(\mathcal{S}, N) = \max\Bigl\{|C| :\; \Pr_{\ell \sim \mathrm{Unif}([N]^C)}[\eps_{\text{margin}}(\mathcal{V}_{C,\ell}) \leq \eps] \geq 1-\delta\Bigr\}.
\]
\end{definition}

\begin{proposition}
$\dFI^{0,0}(\mathcal{S},N) = \dFI(\mathcal{S},N)$, and for any $\eps,\delta \geq 0$:
$\dFI \leq \dFI^{\eps,\delta} \leq |\mathcal{S}|-2$.
\end{proposition}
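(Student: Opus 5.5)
The plan is to prove the three claims in order, working directly from the definitions. For $\dFI^{0,0}=\dFI$ and the lower bound we use Theorem~\ref{thm:coherence}; for the upper bound we need only that every admissible $C$ lies in $\mathcal{S}\setminus\{s_0,g\}$.

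\emph{Step 1: $\dFI^{0,0}=\dFI$.} With $\delta=0$, the condition $\Pr_{\ell}[\eps_{\text{margin}}(\mathcal{V}_{C,\ell})\le 0]\ge 1$ says that every one of the $N^{|C|}$ assignments has positive uniform mass. So the condition holds iff $\eps_{\text{margin}}(\mathcal{V}_{C,\ell})=0$ for \emph{all} $\ell$. Since the space is finite, "almost surely" and "for every $\ell$" coincide here. Next, $\eps_{\text{margin}}=0$ means $m(s,P^*,\mathcal{V})\ge 0$ at every state of the widest-bottleneck path $P^*$. Because $P^*$ is the maximizer in the definition of $\Coh$, this is exactly $\Coh(\mathcal{V})\ge 0$. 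Conversely, $\Coh\ge 0$ forces the min-margin along $P^*$ to be nonnegative. Theorem~\ref{thm:coherence} then converts $\min_\ell \Coh(\mathcal{V}_{C,\ell})\ge 0$ into FI-shatterability of $C$. Taking the maximum over $|C|$ on both sides gives equality.

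\emph{Step 2: monotonicity in $(\eps,\delta)$.} For fixed $C$, the event $\{\eps_{\text{margin}}\le\eps\}$ grows with $\eps$, so its probability is nondecreasing in $\eps$. The threshold $1-\delta$ decreases with $\delta$. Hence the feasible family of sets $C$ for $(0,0)$ is contained in the feasible family for any $(\eps,\delta)\ge 0$, and the maxima satisfy $\dFI=\dFI^{0,0}\le\dFI^{\eps,\delta}$.

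\emph{Step 3: upper bound.} Every admissible $C$ lies in $\mathcal{S}\setminus\{s_0,g\}$, which has $|\mathcal{S}|-2$ elements, so $\dFI^{\eps,\delta}\le|\mathcal{S}|-2$. We should also note that the maximum is over a nonempty family: $C=\emptyset$ gives a single trivial family, which is feasible whenever Free Inference holds on the obstacle-free environment.

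The main obstacle is Step 1, specifically identifying "$\eps_{\text{margin}}=0$ on $P^*$" with "$\Coh\ge 0$" and hence with Free Inference. This requires $P^*$ to be well-defined, i.e.\ a fixed tie-breaking choice among maximizers of $\Coh$, taken to be the same path used in both definitions. It also requires a convention when $\Fstar$ contains no $s_0$–$g$ path. I would first fix these conventions explicitly: a canonical argmax, and $\eps_{\text{margin}}=1$, $\Coh=-\infty$ when no path exists. Then both sides fail together and the equivalence goes through via Theorem~\ref{thm:coherence}.
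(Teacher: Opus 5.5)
Your proof is correct. The paper states this proposition without proof, treating it as immediate from the definitions, and your three steps are exactly that unwinding: for $\delta=0$ every one of the $N^{|C|}$ assignments must have $\eps_{\text{margin}}=0$, i.e.\ $\Coh\ge 0$, hence Free Inference by Theorem~\ref{thm:coherence}; the feasibility condition is monotone in $(\eps,\delta)$; and $C\subseteq\mathcal{S}\setminus\{s_0,g\}$.

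One simplification: Step~1 does not need a canonical tie-break for $P^*$. Every maximizer $P$ of $\max_P\min_{s\in P} m(s,P,\mathcal{V})$ has bottleneck exactly $\Coh(\mathcal{V})$, so $\eps_{\text{margin}}(\mathcal{V})=0\iff\Coh(\mathcal{V})\ge 0$ whichever maximizer is used. Only your no-path convention is genuinely needed.
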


\subsection{Generalization Bound}\label{ssec:gen}

\begin{theorem}[Generalization for Free Inference]\label{thm:generalization}
Fix a constraint set $C$. Draw $m$ entry-level assignments independently
and uniformly at random,
\[
\ell_1,\ldots,\ell_m \;\sim\; \mathrm{Unif}\bigl([N]^{|C|}\bigr),
\]
and let
$\hat p_m = \tfrac{1}{m}\sum_{i=1}^m \mathbf{1}\{\eps_{\mathrm{margin}}(\mathcal{V}_{C,\ell_i}) > \eps\}$
denote the empirical failure rate. Then, with probability at least
$1-\delta$ over the random draw, a new assignment $\ell_{\mathrm{new}}$
drawn uniformly from the same distribution satisfies
\[
\Pr_{\ell_{\mathrm{new}}}\!\bigl[\eps_{\mathrm{margin}}(\mathcal{V}_{C,\ell_{\mathrm{new}}}) > \eps\bigr]
\;\leq\; \hat p_m + \sqrt{\frac{d\,\log mN + \log(1/\delta)}{2m}},
\]
where $d = \dFI^{\eps,\delta}(\mathcal{S}, N)$.
\end{theorem}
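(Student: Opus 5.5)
The plan is to obtain the bound from a concentration inequality for i.i.d.\ indicator variables, followed by a union bound over a finite class whose logarithmic size is $d\log(mN)$.

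\emph{Fixed assignment.} Fix $C$ and $\eps$. Define the failure indicator $Z(\ell)=\mathbf{1}\{\eps_{\mathrm{margin}}(\mathcal{V}_{C,\ell})>\eps\}$, whose true mean is $p=\Pr_{\ell}[Z(\ell)=1]$. The samples $\ell_1,\ldots,\ell_m$ are i.i.d.\ uniform on $[N]^{|C|}$, so the $Z(\ell_i)$ are i.i.d.\ Bernoulli$(p)$. Hoeffding's inequality then gives
\[
\Pr\bigl[p-\hat p_m>t\bigr]\le e^{-2mt^2}.
\]
Taken alone, this already yields the bound with $\sqrt{\log(1/\delta)/(2m)}$ in place of the stated term. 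Since $C$ is fixed and $Z$ does not depend on the data, the target quantity $\Pr_{\ell_{\mathrm{new}}}[\cdot]$ is exactly $p$.

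\emph{Union bound and the role of $d$.} To make $d=\dFI^{\eps,\delta}(\mathcal{S},N)$ appear, I will make the statement uniform over all constraint sets the Constructor might select after seeing the data. Concretely, I will apply the union bound over every pair $(C',\ell\!\restriction)$ where $|C'|\le d$ and the restricted assignments $\ell\!\restriction$ take values in $[N]$. I will also account for the $m$ sampled positions, i.e.\ which of the $m$ draws a "witness" assignment coincides with.

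This gives a finite class $\mathcal{F}$ of events satisfying
\[
|\mathcal{F}|\le (mN)^d,
\]
obtained by choosing, for each of at most $d$ relevant cells, a sample index in $[m]$ and a level in $[N]$. Setting $t=\sqrt{(d\log(mN)+\log(1/\delta))/(2m)}$ makes
\[
|\mathcal{F}|\,e^{-2mt^2}\le\delta.
\]
Because the union bound only enlarges the deviation term, the bound in the theorem holds a fortiori for the fixed $C$.

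\emph{Main obstacle.} The difficult step is justifying why the complexity term is indexed by $d$ rather than by $|C|$. Two cases arise.

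\emph{Case $|C|\le d$.} The count $(mN)^{|C|}\le(mN)^d$ applies directly.

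\emph{Case $|C|>d$.} This requires an argument that failure events depend on at most $d$ "critical" cells. I would try first to extract them from Theorem~\ref{thm:coherence}: the widest-bottleneck path $P^*$ and its margin signs should be determined by the levels on a small set of cells. Failing that, the fallback is to note that the bound holds trivially with complexity zero. The union-bound term is purely conservative, since the core Hoeffding step suffices for a fixed $C$. The statement is therefore valid for any $d\ge0$.
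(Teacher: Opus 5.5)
Your proof is correct, and your first step carries it on its own. For the fixed $C$ of the statement, $Z(\ell_1),\ldots,Z(\ell_m)$ are i.i.d.\ Bernoulli$(p)$, with $p$ exactly the held-out failure probability. One-sided Hoeffding then gives $p\le\hat p_m+\sqrt{\log(1/\delta)/(2m)}$ with probability at least $1-\delta$. Since $d\ge 0$ and $mN\ge 1$, adding $d\log(mN)$ under the root only weakens this.

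The paper takes a uniform-convergence route instead. It counts the success/failure patterns the indicator can induce on the $m$ samples as the constraint set varies. It bounds this growth function by $(mN)^d$ via a Natarajan--Haussler form of Sauer--Shelah with $d=\dFI^{\eps,\delta}$, and then union-bounds Hoeffding over those patterns. That is what makes $d$ appear, and it aims at a stronger guarantee, uniform over constraint sets, that would survive choosing $C$ after seeing the data. However, the step identifying $\dFI^{\eps,\delta}$ with the dimension controlling that growth function is asserted rather than derived. For the fixed-$C$ claim actually stated, that step is unnecessary.

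Your route is fully rigorous and gives a sharper bound. It also makes explicit that $d$ plays no role in the theorem as written.

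Your middle section proves nothing as it stands. The class $\mathcal{F}$ of ``sample index and level per relevant cell'' choices is never tied to the deviation events being controlled, and the case $|C|>d$ is left open. Drop it and rely on the Hoeffding step plus the monotonicity observation, as your own fallback already does.
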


\begin{proof}[Sketch]
The argument is a uniform-convergence Hoeffding bound, adapted to the
$N$-ary FI setting.

\textbf{Step 1: Indicator and population mean.} Each assignment $\ell$
defines a Bernoulli random variable
$Z(\ell) = \mathbf{1}\{\eps_{\mathrm{margin}}(\mathcal{V}_{C,\ell}) > \eps\}$,
the failure indicator. Its mean
$p = \Pr_\ell[\eps_{\mathrm{margin}} > \eps]$ is the population
failure probability we wish to bound. The empirical mean
$\hat p_m = \tfrac{1}{m}\sum_i Z(\ell_i)$ is the natural estimator.

\textbf{Step 2: Growth function.} The set of distinct success/failure
patterns that the failure indicator can induce on $m$ assignments,
as the constraint set $C$ varies over subsets of size $|C|$, is bounded
by the growth function $\Pi(m)$. By the Natarajan--Haussler
generalisation of the Sauer--Shelah lemma to $N$-ary labels,
$\Pi(m) \leq (mN)^d$ where $d = \dFI^{\eps,\delta}$.

\textbf{Step 3: Hoeffding and union bound.}
For each fixed failure pattern, Hoeffding's inequality gives
\[
\Pr\bigl[p - \widehat p_m > t\bigr] \le e^{-2mt^2}.
\]
Taking a union bound over all realizable patterns, whose number satisfies
\(\Pi(m) \le (mN)^d\), we obtain
\[
\Pr\bigl[p - \widehat p_m > t\bigr]
\le (mN)^d e^{-2mt^2}.
\]
Choosing $t$ such that $(mN)^d e^{-2mt^2} \le \delta$ yields, with probability at least $1-\delta$,
\[
p \le \widehat p_m + \sqrt{\frac{d\log(mN)+\log(1/\delta)}{2m}}.
\]
\end{proof}
\emph{Remark.} In regimes where $\log m \ll \log N$, one may write informally
\[
\hat p_m + \sqrt{\frac{d\log N+\log(1/\delta)}{2m}}.
\]

\begin{comment}
\textbf{Step 3: Hoeffding + union bound.} Hoeffding's inequality applied
to the bounded random variable $Z(\ell) \in \{0,1\}$ yields, for each
fixed pattern of failures,
\[
\Pr\bigl[\, p - \hat p_m \,>\, t \,\bigr] \;\leq\; \exp(-2 m t^2).
\]
Taking the union bound over the $\Pi(m) \leq (mN)^d$ realisable
patterns and equating the right-hand side to $\delta$,
\[
(mN)^d \cdot \exp(-2 m t^2) \;\leq\; \delta
\quad \Longleftrightarrow \quad
t \;\leq\; \sqrt{\frac{d \log(mN) + \log(1/\delta)}{2m}}.
\]

\textbf{Step 4: Discard the $\log m$ term.} For $m$ large enough that
$\log m$ is dominated by $\log N$ (the regime of interest, where the
sample size is comparable to or smaller than the family-size logarithm),
$\log(mN) = \log m + \log N \approx \log N$, giving
\[
p \;\leq\; \hat p_m + \sqrt{\frac{d \log N + \log(1/\delta)}{2m}}
\]
with probability at least $1 - \delta$. This is the stated bound. \qed
\end{proof}
\end{comment}

Theorem~\ref{thm:generalization} is the statistical-learning-theoretic
statement of Free Inference: $\dFI^{\eps,\delta}$ is the complexity
parameter controlling how many tested nesting schemes are required
before one can certify, with confidence $1-\delta$ and tolerance $\eps$,
that the VM agent will navigate coherently on untested nestings. The
$O(1/\sqrt{m})$ convergence rate is the familiar PAC scaling, and the
additive form $\hat p_m + \sqrt{\cdot}$ accommodates the regime in which
some tested nestings fail. The empirical results in
Section~\ref{ssec:expB} confirm that this is the relevant operational
regime, and that the Hoeffding form holds in $100\%$ of informative
configurations even when $\hat p_m \in (0.2, 0.8)$.

\subsection{Graded Capacity at the Phase Transition}

\begin{theorem}[Phase transition]\label{thm:phase}
For grid-worlds with $N=10$, the coherence score $\Coh$ exhibits a sharp
threshold at coarse density resolution
($\Delta\rho \approx 0.1$): a single-step jump from $\Coh = 0$ to
$\Coh \approx -0.43$ across all tested grid sizes. At finer resolution
($\Delta\rho = 0.025$), the jump resolves into a continuous transition
band of width $\approx 0.10$--$0.15$, within which
$\dFI^{\eps,\delta}(\mathcal{S},N)$ is a strictly increasing function of
$\eps$.
\end{theorem}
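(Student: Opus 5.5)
The statement mixes an empirical claim (the numerical values $\Coh\approx-0.43$ and band width $0.10$--$0.15$ for $N=10$) with a structural claim (strict monotonicity of $\dFI^{\eps,\delta}$ in $\eps$ inside the band). I would therefore prove it in two layers. The structural layer is argued from the definitions. The numerical constants are certified by exhaustive or Monte-Carlo evaluation of the arithmetic certificate from Theorem~\ref{thm:coherence}, using the protocol of Section~\ref{sec:experiments}. The key reduction is that, by Theorem~\ref{thm:coherence}, everything depends only on $\bar Q$ along candidate paths through $\Fstar$. So for a fixed grid and density $\rho=|C|/|\mathcal{S}|$, the quantity $\min_\ell \Coh(\mathcal{V}_{C,\ell})$ is a finite computation, and the threshold statement becomes a statement about how this finite quantity depends on $\rho$.

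\textbf{Step 1 (the plateau $\Coh=0$ below threshold).} For $\rho<\rho_c$ I would show that the worst-case assignment still leaves a path $P$ through $\Fstar$ on which every state has margin $m(s,P,\mathcal{V})\ge 0$. The argument is Remark~\ref{rem:worstcase}: contested cells receive a $r_{\text{penalty}}/N$ veto, so $\bar Q$ ranks consensus-free continuations first. Along a shortest consensus path the best alternative typically ties, giving margin exactly $0$. This explains why the plateau value is $0$ rather than positive, and it holds uniformly across grid sizes because the construction is local.

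\textbf{Step 2 (the jump at coarse resolution).} Above $\rho_c$ I would exhibit a forced ``pocket'' configuration of the kind described in Remark~\ref{rem:worstcase}. In it, every path through $\Fstar$ contains a fork where a contested exit is vetoed while a dead-end consensus branch is preferred. The normalized margin at that fork is then a ratio determined by $N=10$ and the penalty/step ratio. I would compute this ratio explicitly, expecting roughly $-0.43$. Because the offending fork pattern is scale-free, the value is independent of grid size. At resolution $\Delta\rho\approx0.1$, consecutive densities straddle the onset of such pockets, which produces the single-step jump.

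\textbf{Step 3 (the fine-resolution band and monotonicity).} I would define $f_C(\ell)=\eps_{\text{margin}}(\mathcal{V}_{C,\ell})$. Within the band, the distribution of $f_C$ under a uniform $\ell$ is non-degenerate: some assignments create pockets and others do not. Monotonicity $\eps\le\eps'\Rightarrow \dFI^{\eps,\delta}\le\dFI^{\eps',\delta}$ is immediate from the definition, since the event $\{f_C\le\eps\}$ grows with $\eps$. For strictness, I would show that increasing $|C|$ by one cell inside the band shifts mass of $f_C$ upward by a bounded amount, roughly one additional bad state per $|P^*|$. Choosing $\eps<\eps'$ separated by more than $1/|P^*|$ then lets the larger $\eps'$ admit at least one more cell at confidence $1-\delta$.

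\textbf{Main obstacle.} The hardest part is the band width $0.10$--$0.15$ and the exact constant $-0.43$. These are not derivable purely combinatorially without fixing the reward magnitudes and the grid geometry. I would handle them by first proving the qualitative statements in Steps 1--3. I would then certify the constants computationally: evaluate $\min_\ell\Coh$ over sampled worst-case assignments for each tested grid size. To make the sampling rigorous, I would apply Theorem~\ref{thm:generalization}, which turns the empirical failure rates into high-probability bounds on the population quantities defining $\dFI^{\eps,\delta}$.
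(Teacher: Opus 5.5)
The paper gives no derivation of this statement. It rests on Experiment~C (random in-$\Fstar$ masks of one fixed $N=11$ family on the $100\times 50$ grid, Table~\ref{tab:expC-N11}) and on the observation that at $\rho_{\mathrm{test}}=0.075$, raising $\eps$ to about $0.15$ removes most failures. Separating an empirical layer from a structural one is therefore sensible, and weak monotonicity, $\eps\le\eps'\Rightarrow\dFI^{\eps,\delta}\le\dFI^{\eps',\delta}$, is indeed immediate. The genuine gap is strictness in Step~3. It cannot be closed, because it is false for the quantity as defined.

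First, $\dFI^{\eps,\delta}$ is integer-valued and changes at only finitely many $\eps$, since there are finitely many pairs $(C,\ell)$. So it is never strictly increasing on an interval; you silently weaken the claim to a jump across gaps $\eps'-\eps>1/|P^*|$.

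Second, even that weaker claim fails, because $\dFI^{\eps,\delta}$ maximizes over \emph{all} $C$ and your bounded-shift lemma does not hold. Let $P$ be a Manhattan-shortest path from $s_0$ to $g$ and set $C=\mathcal S\setminus P$. Since $O_N=C$, we get $\Fstar=P$ for every $\ell$. Assume only that optimal value decreases strictly with distance to $g$ and that a collision is worse than a step. Then at each $s\in P\setminus\{g\}$:
\begin{itemize}
\item the forward successor attains the Manhattan lower bound on distance to $g$ in every $\nu_i$;
\item the backward move is strictly farther from $g$ in every $\nu_i$;
\item every other move hits the boundary or a cell of $C$, which is at least as far from $g$ wherever it is free, and is a collision in $\nu_N$.
\end{itemize}
Hence, for each $a\neq a_P(s)$, $Q_i(s,a_P(s))\ge Q_i(s,a)$ for all $i$, with strict inequality for some $i$. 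So $a_P(s)$ is the strict argmax of $\bar Q$, the VM follows $P$ to $g$, and $C$ is FI-shatterable for every $N$. Any larger $C$ leaves fewer than the $W+H-1$ cells an $s_0$--$g$ path needs, so no such path survives in $\Fstar$. Therefore $\dFI^{\eps,\delta}$ does not depend on $\eps$ at all. It equals $\dFI=|\mathcal S|-(W+H-1)$ if path-less families count as failures, and $|\mathcal S|-2$ under a vacuous convention. At the maximizer, every additional cell disconnects $\Fstar$ rather than costing one extra bad state per $|P^*|$.

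The same example shows that your key reduction changes the object. The quantity $\min_\ell\Coh(\mathcal V_{C,\ell})$ depends on the geometry of $C$, not on $|C|/|\mathcal S|$. The paper's threshold and band are typical-case phenomena of one natural density-gradient family on Random-Maze grids and of random perturbations of it. They are not properties of the extremal $\dFI^{\eps,\delta}$, which by the example is not $\approx\rho_c|\mathcal S|$ either.

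Steps~1--2 are descriptions rather than arguments. ``Typically ties'' and ``the construction is local'' give nothing uniform. Since $\Coh$ is a max over paths of a min over states, a single bad fork does not determine it. The normalized margin at a fork depends on the distances to $g$ in each $\nu_i$, the discount and the normalizing range. So nothing forces a grid-independent constant like $-0.43$, as you yourself concede.

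Finally, Theorem~\ref{thm:generalization} cannot do the certification you assign it:
\begin{itemize}
\item It is a one-sided upper bound on the failure probability of one fixed $C$ under uniform $\ell$, so it says nothing about $\min_\ell\Coh$ or about numerical values of $\Coh$.
\item It cannot certify the universal statement ``no $C$ of size $d$ succeeds at tolerance $\eps$'' that locating $\dFI^{\eps,\delta}$ requires.
\item Its right-hand side contains $d=\dFI^{\eps,\delta}$, so using it to certify $\dFI^{\eps,\delta}$ is circular.
\end{itemize}
What can honestly be supported is the empirical content of Table~\ref{tab:expC-N11}. The ``strictly increasing'' clause must first be restated before any proof can target it, for instance as a drop in the empirical failure rate at fixed $\rho_{\mathrm{test}}$ when $\eps$ is raised.
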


The discrete-versus-continuous distinction has substantive
consequences for how $\dFI^{\eps,\delta}$ relates to $\dFI$. Under the
coarse-resolution view, the strict and relaxed dimensions were
approximately equal ($\dFI^{\eps,0} \approx \dFI$ for small $\eps$),
and the relaxation served principally as a vehicle for sample-complexity
bounds rather than as a source of additional capacity. The fine-resolution
finding inverts this. The in-$F^*$ perturbation analysis
(Section~\ref{ssec:expC}) shows the regime composition shifting smoothly
across roughly six $\rho$-steps, with the variance of $\Coh$ peaking at
the band midpoint, the signature of a continuous order parameter
rather than a delta-function discontinuity. Within the band, the map
$\eps \mapsto \dFI^{\eps,\delta}$ has a non-trivial slope: small
increments in $\eps$ buy substantively larger constraint sets that
admit (probabilistic) Free Inference. Concretely, at perturbation rate
$\rho_{\mathrm{test}} = 0.075$, choosing $\eps \approx 0.15$ reduces
the failure rate from $23\%$ to near zero.

% This upgrades $\dFI^{\eps,\delta}$ from a technical companion of
% $\dFI$ to a \emph{graded capacity measure}: rather than a pair
% (zero-collision threshold $\dFI$ + sample-complexity bound), the
% framework now has a one-parameter family of capacities indexed by
% $\eps$, interpolating continuously between $\dFI$ at one extreme and
% the VC-style upper bound $|\mathcal{S}|-2$ at the other. The 30\%
% coherence gap $\Delta_{\mathrm{coh}}$ between $\dFI$ and $\dVC$ is then
% the \emph{integrated} gap across the band, distributed continuously
% across $\eps$, rather than a single discrete step. Structurally, the
% graded behaviour is reminiscent of fat-shattering dimensions in
% real-valued PAC learning~\cite{bartlett1996}, where shatterability is
% parametrised by a margin scale rather than being a binary property.
% The functional form of $\eps \mapsto \dFI^{\eps,\delta}$ within the band
% is the framework's key open quantitative object.

%=========================================================================
\section{The PMS Identification Dimension and VM--PMS Duality}\label{sec:pms}
%=========================================================================

%subsection{PMS and the Identification Game}

The Posterior Mode Sampling agent commits at each step to the MAP
hypothesis $\hat\nu_t = \argmax_\nu \xi_t(\nu)$ and follows $\pi_{\hat\nu_t}$.
We reserve PMS for this posterior-mode mechanism and use TTS (true Thompson
Sampling) for the stochastic variant that samples $\hat\nu_t \sim \xi_t$.

The PMS navigation game maps onto online prediction with expert advice:
at each state (round), PMS predicts the action of the MAP expert; a
collision is a mistake, after which the belief is updated via Bayes'
rule.

\begin{definition}[PMS identification dimension]
\[
\dPMS(\mathcal{S}, N) = \max_{\mathcal{V},\nu^*}\; B_{\PMS}(\mathcal{V}, \nu^*),
\]
the worst-case number of collisions PMS suffers in the true environment $\nu^*$,
over all nested families and choices of $\nu^*$.
\end{definition}

\begin{proposition}
$\lfloor \log_2 N \rfloor \leq \dPMS(\mathcal{S}, N) \leq N-1$.
The lower bound $\Ldim(U_N) = \lfloor \log_2 N \rfloor$ is the optimal
deterministic mistake bound in the abstract Littlestone game; PMS cannot
beat it. The upper bound $N-1$ is attained when each collision eliminates
exactly one environment.
\end{proposition}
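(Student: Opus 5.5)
We prove the two inequalities $\lfloor \log_2 N \rfloor \leq \dPMS(\mathcal{S},N) \leq N-1$ separately. The upper bound comes from an elimination argument. The lower bound needs an explicit adversarial nested family.

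\textbf{Upper bound.} We first check that every PMS collision strictly shrinks the support of the belief. At step $t$, PMS follows $\pi_{\hat\nu_t}$, the optimal policy of the MAP environment. Because penalties dominate step costs, that policy never enters a cell of $O_{\hat\nu_t}$. So if a collision occurs at a cell $s \in O_{\nu^*}$, then $s \notin O_{\hat\nu_t}$. Conditioning the belief on the observed collision gives zero likelihood to every environment in which $s$ is free, and $\hat\nu_t$ is one of them. Hence each collision removes at least one hypothesis from the support. The true environment $\nu^*$ is never removed, because it is consistent with every observation. Starting from $N$ hypotheses, at most $N-1$ removals are therefore possible.

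Once the support is $\{\nu^*\}$, PMS follows $\pi_{\nu^*}$ and never collides again. Taking the maximum over $\mathcal{V}$ and $\nu^*$ gives $\dPMS \leq N-1$. This matches the statement's remark that the bound is attained exactly when each collision eliminates one environment.

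\textbf{Lower bound.} Here I would not call on the abstract Littlestone bound for $U_N$ directly. PMS is one specific deterministic learner, and its belief only ever changes through collisions. It is therefore cleaner to build a nested family and a $\nu^*$ that force many collisions. The idea is that nestedness makes the surviving hypotheses after a collision form an interval: colliding at a cell with entry level $k$ leaves exactly $\{\nu_k,\ldots,\nu_N\}$.

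The construction places, along the route to $g$, a sequence of ``trap'' cells with increasing entry levels. It is arranged so that the current MAP environment, which is the least restrictive survivor under uniform tie-breaking, routes through a cell that first becomes an obstacle one level above it. Taking $\nu^* = \nu_N$, this yields up to $N-1 \geq \lfloor\log_2 N\rfloor$ collisions. We need $|\mathcal{S}|$ large enough (compare $|\mathcal{S}|>N+2$ in Theorem~\ref{thm:vc-sep}) to fit $N-1$ distinct traps, each with a feasible detour.

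If the full chain cannot be realized, a weaker fallback suffices for the stated bound. One halves the surviving interval at each trap, which needs only about $\log_2 N$ traps. This is the standard reduction mirroring the Littlestone tree for $U_N$.

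\textbf{Main obstacle.} The hard step is making the construction consistent with deterministic MAP tie-breaking and with the optimal-path structure of grid worlds. We must ensure that at each stage the MAP environment's optimal path really enters the next trap, rather than a detour its $Q$-function happens to prefer.

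I would first try a corridor gadget. Trap $k$ sits on the unique shortest path that is open in $\nu_k$. The detour around it is exactly one step longer and is open in every environment. This makes $\pi_{\nu_k}$ strictly prefer the trap, while $\nu_{k+1},\ldots,\nu_N$ are forced onto the detour. If tie-breaking is adversarial rather than fixed, we instead use the halving variant. It only needs the lower bound of $\lfloor\log_2 N\rfloor$ collisions, and it is robust to which survivor is selected.
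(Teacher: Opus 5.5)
\textbf{Upper bound.} This is the paper's own elimination argument. The paper only asserts that each collision removes at least one environment; you supply the reason. Since $\pi_{\hat\nu_t}$ never enters $O_{\hat\nu_t}$, a collision refutes $\hat\nu_t$, while $\nu^*$ always survives.

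\textbf{Lower bound.} Here you take a genuinely different route. The paper justifies $\lfloor\log_2 N\rfloor\le\dPMS$ in one line, by citing $\Ldim(U_N)$ as the optimal deterministic mistake bound. You instead build an explicit witness family, and your reluctance to use the citation is well founded. In navigation the agent, not the adversary, decides which cells are queried, and only one kind of error (entering an obstacle) is ever charged. So the Littlestone lower bound does not transfer automatically: an agent that always follows $\pi_{\nu_N}$ avoids $O_N\supseteq O_{\nu^*}$ and never collides on any nested family.

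Your chain of traps $t_k$ with $\ell(t_k)=k+1$ and $\nu^*=\nu_N$ makes every collision eliminate exactly one environment. This gives $N-1$ collisions, proving the lower bound and the attainment remark together. The price is two hypotheses that the statement omits but genuinely needs. First, $\mathcal{S}$ must be large enough to host the gadgets; if $\mathcal{S}\setminus\{s_0,g\}=\emptyset$ there are no obstacles and $\dPMS=0$. Second, the MAP must break ties toward the least restrictive survivor.

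\textbf{The tie-breaking fallback fails.} With the uniform prior and $0/1$ collision likelihoods, the belief is always uniform on an upper interval $\{\nu_j,\ldots,\nu_N\}$. The MAP is therefore decided \emph{entirely} by tie-breaking. If ties go to the most restrictive survivor, PMS is exactly the $\pi_{\nu_N}$-follower above and never collides on any nested family. Then no halving construction, or any other, forces even one collision, and the stated lower bound, which is at least $1$ for $N\ge 2$, is false.

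More generally, suppose the rule picks $\nu_{m(j)}$ from $\{\nu_j,\ldots,\nu_N\}$. Any collision then removes at least $\nu_j,\ldots,\nu_{m(j)}$, so the attainable count is governed by how optimistic $m$ is: $N-1$ for least restrictive, roughly $\log_2 N$ for a median rule, and $0$ for most restrictive. Halving is a fine space-saving variant under your convention, but it is not robust to the selection rule. It silently re-imports the Littlestone reasoning you set aside, which needs the adversary to choose the queried instance and to punish either predicted label. Drop the robustness claim and state the convention (e.g.\ $\argmax$ returning the lowest index) as an explicit hypothesis.

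\textbf{A minor point on the gadget.} In a 4-connected grid all paths between two fixed cells have the same length parity, so a detour cannot be exactly one step longer. Make it two steps longer, and separate consecutive detour loops by entry-level-$1$ walls. Otherwise, after skirting $t_{k-1}$, $\pi_{\nu_k}$ can stay on the detour side and bypass $t_k$ at equal cost, which breaks your ``unique shortest path'' requirement.
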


\subsection{The VM--PMS Duality}

The pair $(\dFI, \dPMS)$ provides complementary complexity measures:
$\dFI$ is measured in constraint cells and captures the \emph{zero-collision}
capacity of VM; $\dPMS$ is measured in collisions and captures the
\emph{identification cost} of PMS when zero-collision navigation is
infeasible. The continuous transition band established
empirically in Section~\ref{ssec:expC} subdivides the region around
$\dFI$ into a gradient: $\dFI \leq |C| \leq \dFI^{\eps,\delta}$ is a
non-trivial interval over which collisions remain rare-but-nonzero in
a controlled, $\eps$-dependent way.

\begin{theorem}[Graded hybrid strategy]\label{thm:hybrid}
For tolerance $\eps \geq 0$, the hybrid strategy that runs VM and
switches to PMS only after the collision rate exceeds the band's
expected rate at the operational density (the \emph{$\eps$-aware
switching rule}) achieves a number of collisions
\[
B_{\text{hybrid}}(\eps) \;\leq\;
\begin{cases}
0 & \text{if } |C| \leq \dFI,\\
\eps \cdot |P^*| & \text{if } \dFI < |C| \leq \dFI^{\eps,\delta},\\
1 + \dPMS & \text{if } |C| > \dFI^{\eps,\delta}.
\end{cases}
\]
The deterministic ``switch on first collision'' rule of the
zero-tolerance setting is the limiting case $\eps \to 0$.
\end{theorem}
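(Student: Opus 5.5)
The plan is a case split on $|C|$ according to the three regimes in the statement. In each regime I will bound the collisions of the hybrid policy by combining an earlier result with the design of the $\eps$-aware switching rule. Throughout, I interpret ``$|C|\le d$'' as a worst-case statement over entry-level assignments for the first regime and as a high-probability statement over $\ell\sim\mathrm{Unif}([N]^C)$ for the second, matching Definition~\ref{def:dfi} and the definition of $\dFI^{\eps,\delta}$. The bound in the second case will then hold with probability at least $1-\delta$.

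\textbf{Case $|C|\le \dFI$.} I would not use $|C|\le\dFI$ merely as a size comparison. Instead I would restrict to $C$ that is FI-shatterable, or to subsets of an FI-shatterable set; this needs a monotonicity remark that subsets of FI-shatterable sets are FI-shatterable, which I would check via Theorem~\ref{thm:coherence}. Under that restriction, Theorem~\ref{thm:coherence} gives $\Coh(\mathcal{V}_{C,\ell})\ge 0$ for every $\ell$. The VM trajectory then stays in $\Fstar$ and reaches $g$. Since $\Fstar$ is collision-free in every $\nu$, including $\nu^*$, no collision ever occurs. The switching rule is never triggered, so the hybrid policy coincides with VM and $B_{\text{hybrid}}=0$.

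\textbf{Case $\dFI<|C|\le\dFI^{\eps,\delta}$.} By the definition of $\dFI^{\eps,\delta}$, with probability at least $1-\delta$ over $\ell$ we have $\eps_{\text{margin}}\le\eps$. So at most $\eps|P^*|$ states on $P^*$ have negative margin. I would argue that a collision can only occur at such a state: at a state with non-negative margin, $\bar Q$ prefers the continuation action of $P^*$, which lies in $\Fstar$. Each negative-margin state therefore costs at most one collision. By Remark~\ref{rem:worstcase}, the Bayes update after a collision concentrates the belief on the environments consistent with it, which restores the margin and returns the agent to $P^*$; this is the ``one-bump'' mechanism. The switching threshold is set at the band's expected rate, so the rule does not fire while collisions stay at most $\eps|P^*|$. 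This yields $B_{\text{hybrid}}\le\eps|P^*|$.

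\textbf{Case $|C|>\dFI^{\eps,\delta}$.} I would show that the VM phase incurs at most one collision before the switching rule fires: the threshold is calibrated so that the first collision in excess of the expected band rate triggers the switch. After the switch, the policy is PMS started from a belief that is only more concentrated. By the definition of $\dPMS$ as a worst case over families and $\nu^*$, PMS incurs at most $\dPMS$ further collisions. This gives $B_{\text{hybrid}}\le 1+\dPMS$.

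For the limiting statement, as $\eps\to0$ the tolerated rate vanishes, so any collision exceeds it and the rule becomes ``switch on first collision''. The three bounds then collapse to $0$ and $1+\dPMS$.

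\textbf{Main obstacle.} The hardest step is the middle case. It requires proving that a negative-margin state causes at most one collision and that the agent then rejoins $P^*$ rather than drifting into a new pocket, which is the non-decomposability issue of Remark~\ref{rem:worstcase}. I would first try to show that after a collision at cell $c$ the posterior zeroes out all $\nu_i$ with $c\in O_i$. The surviving family is again nested, and its consensus set contains the old $\Fstar$, so $P^*$ remains feasible with no smaller margins; an induction over negative-margin states would then finish the case. If this monotonicity of margins fails, I would weaken the claim to a bound in terms of $\dPMS$, again with high probability.
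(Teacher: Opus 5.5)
Your three-case outline follows the same route as the paper's sketch. Your first case is sound, and more careful than the paper, which reads $|C|\le\dFI$ as if it made $C$ FI-shatterable; the same caveat applies to $\dFI^{\eps,\delta}$ in the middle case. Drop the monotonicity remark rather than trying to check it via Theorem~\ref{thm:coherence}. The family induced by an assignment on $C'\subsetneq C$ is not induced by any assignment on $C$, because every level in $\{1,\dots,N\}$ makes a cell an obstacle at least in $\nu_N$. Just assume $C$ itself is FI-shatterable.

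\textbf{The middle case has a genuine gap.} The paper's sketch also only asserts that the rule ``absorbs'' the violations there. $\eps_{\text{margin}}$ counts states of $P^*$ where the uniform-belief $\bar Q$ prefers an action other than $a_{P^*}(s)$; it does not count collisions. At such a state the VM typically steps into another free cell, usually one free in every hypothesis by the veto of Remark~\ref{rem:worstcase}. So there is no collision and no Bayes update, and the agent is now off $P^*$, where the margins of $P^*$ carry no information. Later collisions occur at states outside $P^*$ and are not bounded by $\eps|P^*|$. No collision at the deviation triggers the one-bump correction.

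Your repair does not close this. First, the update is backwards. A collision at $c$ shows $c\in O_{\nu^*}$, so it eliminates $\nu_1,\dots,\nu_{\ell(c)-1}$, the environments with $c\notin O_i$. Zeroing out the $\nu_i$ with $c\in O_i$ would delete $\nu^*$ itself. Second, with the correct update the survivors' consensus set is still $F_{\nu_N}=\Fstar$. The renormalized average $\frac{1}{N-\ell(c)+1}\sum_{i\ge\ell(c)}Q_i$ can flip a margin on $P^*$ from positive to negative. This happens, e.g., when $a_{P^*}(s)$ was preferred only because the dropped, less restrictive environments have a shortcut through a cell of $C$ near $P^*$. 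So there is no margin monotonicity to induct on. What is missing is a lemma controlling the VM trajectory after a deviation, and nothing in the paper's definitions supplies one.

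\textbf{The third case has its own accounting error.} By definition, the $\eps$-aware rule tolerates collisions up to the band rate before switching. The VM phase can therefore incur about $\eps|P^*|$ absorbed collisions plus the one that trips the threshold. ``At most one collision before the switch'' holds only for $\eps=0$. What your argument actually gives is $\eps|P^*|+1+\dPMS$, which is also the bound on the probability-$\delta$ failure event of the middle case.

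Moreover, $\dPMS$ is the worst case for PMS run from $s_0$ under the uniform prior, whereas here PMS starts mid-episode from a posterior. You need two observations: this posterior is uniform on a nested subfamily containing $\nu^*$, and each PMS collision eliminates the current MAP hypothesis, since its own policy avoids its own obstacles. That bounds the post-switch collisions by the number of survivors minus one, i.e.\ by $N-1$, but not automatically by $\dPMS$. The paper's sketch charges only $\dPMS-1$ collisions after the switch, which does not match its own $1+\dPMS$; your count is the one consistent with the statement.
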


\begin{proof}[Sketch]
Below $\dFI$, VM is collision-free and no switch occurs. In the band
$\dFI < |C| \leq \dFI^{\eps,\delta}$, $\eps$-Free Inference holds with
high probability, so the expected number of margin violations along
$P^*$ is at most $\eps \cdot |P^*|$; the $\eps$-aware rule absorbs
these without switching. Above $\dFI^{\eps,\delta}$, $\eps$-Free
Inference fails: the first collision in excess of the band rate
concentrates the belief and signals $|C| > \dFI^{\eps,\delta}$, and
the switch to PMS then costs at most $\dPMS - 1$ further
collisions. \qed
\end{proof}

% The graded strategy is strictly better than either pure mode and than
% the deterministic first-collision hybrid in the band region: pure VM
% accumulates $\gg \dPMS$ collisions when deeply above threshold; pure
% PMS suffers $\dPMS \geq 1$ even below $\dFI$; the deterministic
% hybrid switches prematurely on band-region samples whose collisions
% are within the $\eps$-tolerance budget. The graded hybrid thus
% resolves the ``when to average, when to select?'' question with
% $\eps$-resolution: \emph{switch when the collision rate exceeds the
% band's expected rate}, with the zero-tolerance rule recovering the
% deterministic switch on the first collision.

%=========================================================================
\section{Connections to Online Learning Theory}\label{sec:learning-theory}
%=========================================================================

The Free Inference framework connects precisely to two recent developments.

\noindent\textbf{Randomized Littlestone (Filmus \emph{et~al.}~\cite{filmus2025}).}
The VM agent is a randomized learner in the expert-advice sense: $\bar Q$
produces a soft action preference analogous to predicting
$p_i \in [0,1]$. The $N$ trained $Q$-agents are experts; the true
environment $\nu^*$ is a perfect expert ($k=0$-realizable). Free Inference
corresponds to $\RL(\mathcal{H}) = 0$: the randomized learner suffers zero
expected mistakes because the adversary cannot construct a shattered tree
of positive expected depth.

\noindent\textbf{Stackelberg--Littlestone (Balcan \emph{et~al.}~\cite{balcan2026}).}
The VM agent is a \emph{leader} who commits to an action via $\bar Q$
before the \emph{follower} (environment type) responds. The SL tree weight
recursion
$\rho_s = \inf_x \max_j (r(z_s,x,f^{(j)}) + \rho_{sj})$
is structurally identical to the coherence recursion underlying
$\Coh(\mathcal{V})$. Balcan \emph{et al.}'s SSOA algorithm is the abstract
counterpart of the VM action rule. Their Theorem~3.6 (infinite Littlestone
dimension but zero leader regret) is the abstract Stackelberg statement of
the Free Inference phenomenon; our $\dVC > \dFI$ separation is its finite
instance.

\noindent\textbf{The non-adaptive gap.}
A crucial distinction separates our setting from standard online learning:
our adversary is \emph{non-adaptive}. The environment is committed before
the episode begins; states visited are determined by the agent's
trajectory, not adaptively by an adversary. This non-adaptivity creates an
exponential capacity gap:
\[
\dFI = \Theta(|\mathcal{S}|) \quad\text{vs.}\quad \Ldim(U_N) = O(\log N).
\]
For a $20 \times 10$ grid with $N=10$: $\dFI \approx 140$ vs.\
$\Ldim(U_{10}) = 3$, a ratio of $\approx 47$. The non-adaptive adversary
is $\approx 47$ times weaker, and this weakness is precisely the
structural feature that enables Free Inference.

%=========================================================================
\section{Experimental Protocol}\label{sec:experiments}
%=========================================================================

This section describes a concrete experimental protocol designed to
validate the central theoretical predictions. We emphasize the experiments
most directly tied to falsifiable claims.

\noindent\textbf{Setup} The base environment is made of Random-Maze nested families   
across six grid sizes $\{10\times5, 20\times10, 40\times20, 60\times30, 80\times40,
100\times50\}$ with obstacle density $\rho$, start $s_0 = (0, H-1)$ and goal $g = (W-1, 0)$. $\rho$ is the fraction of the base maze's obstacle cells retained, so $\rho=0.05$ keeps $5\%$ (a nearly open grid) and $\rho=1$ keeps all of them. For each density level $\rho \in \{0.1, 0.2, \ldots, 1.0\}$ we precompute an optimal
$Q$-table $Q_i$ via value iteration, yielding a library of $N=10$ nested
environments. Experiments below reuse this $Q$-table library with no additional training required, except where noted.

\subsection{Experiment A: Scaling of the Critical Threshold in $N$}
\label{ssec:expA}

\noindent\textbf{Question.} The introduction reports that
$\rho_c \approx 0.7$ is remarkably stable across grid sizes spanning
more than an order of magnitude. How does $\rho_c$, and therefore
$\dFI \approx \rho_c \cdot |\mathcal{S}|$, scale with the
\emph{family size} $N$? More environments could either lower $\rho_c$
(more $Q$-value conflicts to reconcile) or raise it (intermediate
densities fill the $Q$-value gradient, suppressing idiosyncratic
preferences). We conjecture the latter: $\rho_c(N)$ is non-decreasing
and saturates at $\rho_c^{\infty}(\mathcal{S}) \approx 0.7$ for
Random-Maze grids. The measurements below support this conjecture.

\noindent\textbf{Protocol.} We compose nested Random-Maze families from precomputed
$Q$-tables across six grid sizes $\{10\times5, 20\times10, 40\times20, 60\times30, 80\times40, 100\times50\}$ and four family sizes $N \in \{2, 5, 10, 20\}$. The
density gradients are evenly spaced in each case: $N=2$ uses
$\rho \in \{0.5, 1.0\}$, $N=5$ uses $\rho \in \{0.2, 0.4, 0.6, 0.8, 1.0\}$,
$N=10$ uses $\rho \in \{0.1, 0.2, \ldots, 1.0\}$, and $N=20$ uses
$\rho \in \{0.05, 0.10, \ldots, 1.0\}$. For each $(N, \mathcal{S})$ pair we
incrementally add environments in increasing order of density, computing
$\Coh(\mathcal{V})$ at every step. The critical density $\rho_c$ is the
largest value of $\max_k \rho_k$ at which $\Coh \geq 0$.

\noindent\textbf{Results.} Table~\ref{tab:expA-rhoc} reports the measured
critical density $\rho_c$ for each $(N, \mathcal{S})$ pair: rows show how
$\rho_c$ evolves as the family grows from $N=2$ to $N=20$, columns the
across-grid stability at fixed $N$. The smallest grid ($10\times5$) is too
sparse for the family to ever fail ($\rho_c$ pinned at $1.0$); the remaining
five rows are the operative dataset. 
% Figure~\ref{fig:coherence-score} 
% corroborates with Table~\ref{tab:expA-rhoc} findings by showing $\rho_c$ saturation towards 0.7.
\begin{table}[h]
\centering
\renewcommand{\arraystretch}{1.15}
\begin{tabular}{lcccc}
\toprule
Grid & $N=2$ & $N=5$ & $N=10$ & $N=20$ \\
\midrule
$10\times5$    & 1.00 & 1.00 & 1.00 & 1.00 \\
$20\times10$   & 1.00 & 0.60 & 0.70 & 0.70 \\
$40\times20$   & 0.50 & 0.60 & 0.70 & 0.70 \\
$60\times30$   & 0.50 & 0.60 & 0.70 & 0.70 \\
$80\times40$   & 0.50 & 0.60 & 0.60 & 0.65 \\
$100\times50$  & 0.50 & 0.60 & 0.70 & 0.70 \\
\bottomrule
\end{tabular}
\caption{Empirical critical density $\rho_c$ as a function of family
size $N$ and grid size. The smallest grid ($10\times5$) is too sparse
for the family to ever fail.}
\label{tab:expA-rhoc}
\end{table}

% \begin{figure}[t]
% \centering
% \IfFileExists{coeherence_report-_rmaze100.0-N=20-S=4-g.png}{%
% \includegraphics[width=0.48\linewidth]{coeherence_report-_rmaze100.0-N=20-S=4-g.png}%
% }{%
% \fbox{\parbox{0.95\linewidth}{\centering Placeholder: add \texttt{mode\_rename\_dic.png} to the project root to render this figure.}}%
% }
% \caption{Coherence score for the Q-tables trained across 20 obstacle densities (x-axis), aggregated over grid sizes from $10 \times 5$ to $100 \times 50$. It can be easily noticed $\rho_c$ saturation towards 0.7.}
% \label{fig:coherence-score}
% \end{figure}

Excluding the smallest grid, three patterns emerge: $\rho_c$ is
non-decreasing in $N$, rising from $0.50$ at $N=2$ to a plateau around
$0.65$--$0.70$; the increase concentrates between $N=2$ and $N=10$, with
$N=10$ to $N=20$ flat (within resolution $\Delta\rho=0.05$); and the asymptote
$\rho_c^{\infty}\approx0.7$ matches the across-grid constant of the introduction.
The data directly support the conjectured monotone-with-saturation
behaviour.

\noindent\textbf{Empirical claim.} We summarise the finding as:
\begin{quote}
\emph{(Empirical $N$-scaling of $\rho_c$.)} For natural nested families
built from a smooth obstacle-density gradient
$\rho_1 < \rho_2 < \cdots < \rho_N$, the critical density $\rho_c(N)$ is
non-decreasing in $N$ and converges to a grid-dependent asymptote
$\rho_c^{\infty}(\mathcal{S})$ that on Random-Maze grids satisfies
$\rho_c^{\infty} \approx 0.65$--$0.70$, with rapid saturation
($\rho_c(20) \approx \rho_c(10)$).
\end{quote}

\noindent\textbf{Mechanistic interpretation.} Two structural properties
explain the observed pattern. First, intermediate environments
\emph{reduce} per-state disagreement
$D(s) = \mathrm{Var}_\nu Q_\nu(s, \cdot)$: on the $100\times50$ grid at
$\rho=1.0$, $\overline{D(s)}$ drops from $\approx 0.042$ ($N=2$) to
$\approx 0.009$ ($N=20$), and the disagreement field becomes diffuse
(max $D(s) < 0.005$ at $N=20$), so no single bottleneck dominates.
Second, the consensus free space $F^*$ depends on the densest
environment alone, but the path along which $\Coh$ is evaluated
changes with $N$. Together these effects describe belief-weighted
averaging as a \emph{denoiser}: averaging over $N$ correlated
$Q$-tables suppresses idiosyncratic preferences and preserves shared
structure, raising the tolerated density. Saturation reflects the
limit of denoising, once the gradient spans $[\rho_1, \rho_N]$
densely, additional levels yield diminishing returns.

% \noindent\textbf{Open questions.} Whether $\rho_c^{\infty} \approx 0.7$ is
% universal or topology-specific, whether adversarial constructions can
% suppress the denoising effect and push $\rho_c$ closer to the
% simple-majority floor of $1/2$, and whether $\rho_c^{\infty}$ admits a
% closed-form expression in terms of the path-length gap
% $|P_{\rho_1}^*| / |P_{\rho_N}^*|$, remain open
% (Section~\ref{sec:conclusion}, item (viii).

\subsection{Experiment B: Empirical Generalization Bound}
\label{ssec:expB}

\paragraph{Question.} Theorem~\ref{thm:generalization} states that
observing $\eps$-Free Inference on $m$ random entry-level assignments
$\ell : C \to [N]$ controls the failure probability on held-out
assignments, with $\dFI^{\eps,\delta}$ as the sample-complexity
parameter:
$$\Pr_{\ell_{\mathrm{new}}}[\eps_{\mathrm{margin}} > \eps]
  \;\leq\; \hat p_m + \sqrt{\frac{d \log N + \log(1/\delta)}{2m}}.$$
The empirical question is whether the held-out failure rate
$\hat p_{\mathrm{test}}$ stays below this predicted bound across
operationally interesting regimes.

\noindent\textbf{Protocol.} On the $100\times 50$ Random-Maze with
$N = 20$ density levels, we compute $Q_{\mathrm{mix}}$ once from the
fixed family and reuse it across samples. We sample $m=50$ training
nestings $\ell$ uniformly over $[N]^{|C|}$, construct a test maze at
operational level $k$ with obstacle set
$C_k(\ell) = \{c \in C : \ell(c) \leq k\}$, and find the
VM-traversed path $P^*(\ell, k)$ as the coherence-maximising
widest-path under $Q_{\mathrm{mix}}$ over the test maze's free
cells (full grid minus $C_k(\ell)$, not the family's $F^*$). A
nesting fails if $\eps_{\mathrm{margin}} > \eps$. We repeat for $50$
held-out nestings to obtain $\hat p_{\mathrm{test}}$, sweep
$k \in \{1, \ldots, N\!-\!1\}$, and explore three $(\rho_{\max}, \eps)$
pairs spanning the regime structure: \emph{below-threshold}
($\rho_{\max} = 0.7, \eps = 0$, all nestings achieve FI),
\emph{at-threshold} ($\rho_{\max} = 0.75, \eps = 0.02$, $\eps$ just
above typical $\eps_{\mathrm{margin}}$), and \emph{above-threshold}
($\rho_{\max} = 0.80, \eps = 0.03$). We use $\delta = 0.05$.

\noindent\textbf{Results.} Table~\ref{tab:expB-summary} reports the per-level
training and test failure rates and the predicted bound for the
two informative configurations. The below-threshold case
($\rho_{\max} = 0.7, \eps = 0$) yields $\hat p_{\mathrm{train}} =
\hat p_{\mathrm{test}} = 0$ at every level: the fixed family is
genuinely free-inference and the bound holds vacuously, providing a
sanity check.

\begin{table}[t]
\centering
\renewcommand{\arraystretch}{0.9}
\setlength{\tabcolsep}{5pt}
\footnotesize
\begin{tabular}{c|ccc|ccc}
\toprule
& \multicolumn{3}{c|}{$\rho_{\max}=0.75,\ \eps=0.02,\ N=15$} &
  \multicolumn{3}{c}{$\rho_{\max}=0.80,\ \eps=0.03,\ N=16$} \\
$k$ & $\hat p_{\mathrm{tr}}$ & $\hat p_{\mathrm{te}}$ & bound &
      $\hat p_{\mathrm{tr}}$ & $\hat p_{\mathrm{te}}$ & bound \\
\midrule
 1 & 0.10 & 0.18 & 0.34 & 0.08 & 0.08 & 0.32 \\
 2 & 0.34 & 0.34 & 0.75 & 0.16 & 0.28 & 0.44 \\
 3 & 0.40 & 0.44 & 0.85 & 0.26 & 0.36 & 0.62 \\
 4 & 0.54 & 0.56 & 1.00 & 0.38 & 0.38 & 0.82 \\
 5 & 0.64 & 0.68 & 1.00 & 0.42 & 0.40 & 0.88 \\
 6 & 0.68 & 0.76 & 1.00 & 0.48 & 0.56 & 0.97 \\
 7 & 0.76 & 0.82 & 1.00 & 0.60 & 0.66 & 1.00 \\
 8 & 0.78 & 0.84 & 1.00 & 0.68 & 0.70 & 1.00 \\
 9 & 0.80 & 0.82 & 1.00 & 0.66 & 0.68 & 1.00 \\
10 & 0.80 & 0.74 & 1.00 & 0.62 & 0.60 & 1.00 \\
11 & 0.74 & 0.70 & 1.00 & 0.50 & 0.54 & 1.00 \\
12 & 0.46 & 0.42 & 0.94 & 0.44 & 0.64 & 0.91 \\
13 & 0.18 & 0.26 & 0.48 & 0.54 & 0.62 & 1.00 \\
14 & 0.02 & 0.10 & 0.26 & 0.62 & 0.56 & 1.00 \\
15 &      &      &      & 0.78 & 0.82 & 1.00 \\
\bottomrule
\end{tabular}
\caption{Empirical failure rates $\hat p_{\mathrm{tr}},
\hat p_{\mathrm{te}}$ on $m = 50$ training and $50$ held-out nestings
versus the Hoeffding bound of Theorem~\ref{thm:generalization}, as a
function of evaluation level $k$. The bound is reported in
worst-case substitution $d = \dFI^{\eps,\delta}$ inverted from
$\hat p_{\mathrm{tr}}$, with $\delta = 0.05$.}
\label{tab:expB-summary}
\end{table}

Two patterns emerge from Table~\ref{tab:expB-summary}.
\textit{(i) The Hoeffding bound holds in $100\%$ of the $29$ informative
configurations}: $\hat p_{\mathrm{test}} \leq$ bound at every level across
both sweeps, including the central regime where $\hat p$ exceeds $0.5$, the
headline validation of Theorem~\ref{thm:generalization}.
\textit{(ii) $\hat p(k)$ has an inverted-U shape}: sparse at small $k$,
near-deterministic at large $k$, peaking at $0.7$--$0.8$ where random structure
maximises layout variance. The bound tracks $\hat p$ but stays above it throughout.

\noindent\textbf{Empirical claim.} \emph{(Empirical generalization for
$\dFI^{\eps,\delta}$.)} On the canonical Random-Maze, $\hat p_{\mathrm{test}}$
is bounded by the Hoeffding form of Theorem~\ref{thm:generalization} in
$100\%$ of operationally informative configurations spanning
$\hat p \in [0.02, 0.84]$, both at- ($\rho_{\max}=0.75$) and above-threshold
($\rho_{\max}=0.80$).

% \noindent\textbf{Open questions.} The role of the operational level $k$ as a
% distinct parameter, beyond its effect on the obstacle count $|C_k|$, 
% has not been systematically studied, and the inverted-U shape of
% $\hat p(k)$ deserves a structural explanation: it suggests that the
% hardest test mazes for the family are not the densest ones, but those
% of intermediate density where random $\ell$ produces maximum variance
% in the obstacle layout. This is tracked in
% Section~\ref{sec:conclusion}, item (ix).

\subsection{Experiment C: Fine-Resolution Phase Transition}
\label{ssec:expC}

\noindent\textbf{Question.} Does the sharp jump in $\Coh$ observed at
$\Delta\rho = 0.1$ (Theorem~\ref{thm:phase}) resolve into a continuous
band at finer resolution, or remain discontinuous?

\noindent\textbf{Protocol.} On the $100\times50$ grid we hold the
$N=11$ family fixed (the marginal baseline,
$\rho_{\max}{=}0.55$, $\Coh|_{\rho_{\mathrm{test}}{=}0}{=}0$) and probe
coherence by \emph{in-$F^*$ perturbation}: at each test rate
$\rho_{\mathrm{test}}$, mask a random
$\rho_{\mathrm{test}}\!\cdot\!|F^*|$-fraction of $F^*$ interior cells
as additional obstacles, find the coherence-maximising widest path
through the residual set, and record $\Coh$ and $\eps_{\mathrm{margin}}$.
We sweep $\rho_{\mathrm{test}} \in \{0.000, 0.025, \ldots, 0.500\}$
with $M=30$ random masks per density.

\noindent\textbf{Results.} Table~\ref{tab:expC-N11} reports the regime
composition and $\overline{\Coh}$ across the transition window. The
fraction of marginal samples decays \emph{smoothly} across six
successive $\rho_{\mathrm{test}}$ values, spanning
$\Delta\rho_{\mathrm{test}} \approx 0.15$. The C-score traverses
$0 \to {-}0.06 \to {-}0.25 \to {-}0.63 \to {-}0.91 \to {-}0.94 \to
{-}1.00$ over the same six steps, with C std rising to $\approx 0.44$
at the midpoint and falling back to zero, the variance peak is the
signature of a continuous order parameter, not a delta-function
discontinuity.

\begin{table}[h]
\centering
\renewcommand{\arraystretch}{1.0}
\setlength{\tabcolsep}{4pt}
\footnotesize
\begin{tabular}{lcccccccc}
\toprule
$\rho_{\mathrm{test}}$ & 0.000 & 0.025 & 0.050 & 0.075 & 0.100 & 0.125 & 0.150 & 0.175 \\
\midrule
\%\,marginal & 100 & 97 & 87 & 63 & 20 & 7 & 3 & 0 \\
\%\,fails    & 0   & 0  & 13 & 23 & 67 & 83 & 53 & 40 \\
$\overline{\Coh}$ & 0.000 & 0.000 & --0.061 & --0.248 & --0.628 & --0.906 & --0.941 & --1.000 \\
\bottomrule
\end{tabular}
\caption{$N=11$ marginal-baseline sweep across the transition window.
\%\,marginal decays smoothly through six intermediate values rather
than dropping in a single step.}
\label{tab:expC-N11}
\end{table}

% \begin{figure}[t]
% \centering
% \IfFileExists{experiment_c_v2-100x50-Nfam11-Ntest19-M50-perturb.png}{%
% \includegraphics[width=0.95\linewidth]{experiment_c_v2-100x50-Nfam11-Ntest19-M50-perturb.png}%
% }{%
% \fbox{\parbox{0.95\linewidth}{\centering Placeholder: add \texttt{mode\_rename\_dic.png} to the project root to render this figure.}}%
% }
% \caption{$N=11$ marginal-baseline sweep across the transition window.
% \%\,marginal decays smoothly through six intermediate values rather
% than dropping in a single step.}
% \label{fig:coherence-score}
% \end{figure}

\noindent\textbf{Empirical claim.} The transition is continuous at
fine resolution, with band width $\Delta\rho_{\mathrm{band}} \approx
0.10$--$0.15$. The $\eps$-relaxation therefore delivers non-trivial
additional capacity within the band: at $\rho_{\mathrm{test}} = 0.075$,
choosing $\eps \approx 0.15$ lifts the failure rate from $23\%$ to
near-zero, since most failures at that density have
$\eps_{\mathrm{margin}} \in [0.05, 0.20]$.

\noindent\textbf{Reproducibility} Original experiments use the same $Q$-table library with $N=10$;
Experiments~A and B (for $N=20$) require additional training at
intermediate densities. Coherence-score computation is
$O(|\Fstar| \log |\Fstar|)$ per assignment, so all experiments are
feasible on a single workstation. Source code and data files 
can be accessed at https://github.com/anonarticle/BRACIS2026-SUBMISSION

%=========================================================================
\section{Conclusion and Open Problems}\label{sec:conclusion}
%=========================================================================

This paper formalizes a regime of Bayesian meta-RL in which successful control
does not require identifying the true environment: in nested constrained
grid-worlds, a value-mixture agent reaches the goal collision-free while its
posterior stays non-concentrated, \emph{Free Inference}. Its capacity is the
Free Inference dimension $d_{\mathrm{FI}}(S,N)$, which, unlike VC- or
Natarajan-style dimensions, measures not pointwise labeling but how much
structured variation posterior averaging absorbs before global trajectory
coherence breaks. Theoretically $d_{\mathrm{FI}}$ lies strictly below VC,
relates to Natarajan up to a path-length factor, and is dual to the collision
depth; with the PMS dimension $d_{\mathrm{PMS}}$ it yields a three-regime
picture: averaging suffices below $d_{\mathrm{FI}}$, identification becomes
valuable above the relaxed capacity, and hybrids exploit both near the boundary.

Empirically the critical density is stable across Random-Maze families
($\rho_c\approx0.65$--$0.70$) and, at fine resolution, opens into a band where
small tolerated margin violations buy substantially larger usable capacity, so
$d_{\mathrm{FI}}^{\varepsilon,\delta}$ reads as a graded operational capacity:
since safety is often specified through tolerances or near-miss rates, it
converts such tolerances into a sample-complexity-controlled capacity, with the
hybrid rule offering a simple robustness/efficiency/identification trade-off.
Conceptually the paper connects Solomonoff-style mixture prediction, Bayesian
RL, meta-RL task inference, and online learning with structured adversaries; the
key distinction is non-adaptivity, which yields the exponential separation
$d_{\mathrm{FI}}=\Theta(|S|)$ vs.\ a Littlestone scale $O(\log N)$.

Several open problems remain: extending the theory to non-nested families;
defining a navigation-specific Littlestone-type dimension for multi-action,
trajectory-constrained control; formalizing non-adaptive Randomized Littlestone
learning; deriving finite-size scaling laws for the transition band; and
characterizing how adversarial constructions affect the asymptotic critical
density. By making the regime measurable, testable, and connected to classical
dimensions, $d_{\mathrm{FI}}$ is a tool for analyzing safe meta-RL under
structured environmental uncertainty on grid-worlds.

\begin{credits}

%\subsubsection{\ackname} 
\noindent\textbf{Disclosure of AI use}.  The authors carefully reviewed, verified, and take full responsibility for all content. No AI tools were used to generate theoretical or empirical results, figures, or code unless explicitly stated elsewhere.

\noindent\textbf{\discintname}
The authors have no competing interests to declare that are relevant to the content of this article.
\end{credits}

%-------------------------------------------------------------------------
% Bibliography
%-------------------------------------------------------------------------
%\nocite{*}
\bibliographystyle{splncs04}
\bibliography{main}

\end{document}